\newtheorem{theorem}{Theorem}[section]
\newtheorem{lemma}[theorem]{Lemma}
\newtheorem{corollary}[theorem]{Corollary}
\newtheorem{definition}[theorem]{Definition}
\newtheorem{assumption}[theorem]{Assumption}
\def\eqref#1{equation~\ref{#1}}
\def\Eqref#1{Equation~\ref{#1}}
\def\1{\bm{1}}
\DeclareMathAlphabet{\mathsfit}{\encodingdefault}{\sfdefault}{m}{sl}
\SetMathAlphabet{\mathsfit}{bold}{\encodingdefault}{\sfdefault}{bx}{n}
\newcommand{\E}{\mathbb{E}}
\newcommand{\R}{\mathbb{R}}
\DeclareMathOperator*{\argmin}{arg\,min}
\newcommand{\xb}{\bar{x}}
\newcommand{\quant}{\mathcal{Q}}
\newcommand{\xmark}{\ding{55}} 
\newcolumntype{d}{D{.}{.}{-1}} 
\Crefname{section}{Section}{Sections}
\Crefname{table}{Table}{Tables}
\Crefname{figure}{Figure}{Figures}
\DeclareRobustCommand\onedot{\futurelet\@let@token\@onedot}
\def\@onedot{\ifx\@let@token.\else.\null\fi\xspace}
\def\eg{\emph{e.g}\onedot} 
\def\ie{\emph{i.e}\onedot} \def\Ie{\emph{I.e}\onedot}
\def\elsa{\textsc{Elsa}\xspace}
\def\ours{\elsa{}\xspace}
\def\elsaq{\textsc{Elsa}$_{\text{-L}}$\xspace}
\def\oursq{\elsaq{}\xspace}
\def\eg{\emph{e.g}\onedot} 
\def\ie{\emph{i.e}\onedot} \def\Ie{\emph{I.e}\onedot}
\NewDocumentCommand{\alignednum}{m o}{%
  \makebox[5em][c]{%
    \makebox[0pt][r]{#1}%
    \makebox[0pt][l]{%
        \IfValueTF{#2}
          {$_{\pm#2}$}
          {\phantom{$_{\pm0.00}$}}
    }
  }%
}
\title{The Unseen Frontier: Pushing the Limits of\\LLM Sparsity with Surrogate-Free ADMM}
\author{Kwanhee Lee\textsuperscript{1},
Hyeondo Jang\textsuperscript{1},
Dongyeop Lee\textsuperscript{1},
Dan Alistarh\textsuperscript{2},
Namhoon Lee\textsuperscript{1} \\
\textsuperscript{1}POSTECH \quad \textsuperscript{2}ISTA \\
\texttt{\{kwanhee.lee,hyeondo.jang,dongyeop.lee2,namhoon.lee\}@postech.ac.kr},\\
\texttt{dan.alistarh@ist.ac.at}
}
\begin{document}

\maketitle

\vspace{-1em}
\begin{abstract}
Neural network pruning is a promising technique to mitigate the excessive computational and memory requirements of large language models (LLMs).
Despite its promise, however, progress in this area has diminished, as conventional methods are seemingly unable to surpass moderate sparsity levels (50-60\%) without severely degrading model accuracy.
This work breaks through the current impasse, presenting a principled and effective method called \ours, which achieves extreme sparsity levels of up to 90\% while retaining high model fidelity.
This is done by identifying several limitations in current practice, all of which can be traced back to their reliance on a surrogate objective formulation.
\ours tackles this issue directly and effectively via standard and well-established constrained optimization techniques based on ADMM.
Our extensive experiments across a wide range of models and scales show that \ours achieves substantial improvements over existing methods;
\eg, it achieves 7.8$\times$ less perplexity than the best existing method on LLaMA-2-7B at 90\% sparsity.
Moreover, we show that \ours remains stable even at extreme sparsity (e.g., 95\%), yielding up to $\times$3.98 inference speedup and $\times$7.80 memory compression over its dense counterpart.
We also present
\oursq, a quantized variant that scales to extremely large models (27B), and establish its theoretical convergence guarantees.
These results highlight meaningful progress in advancing the frontier of LLM sparsity, while promising that significant opportunities for further advancement may remain in directions that have so far attracted limited exploration.

\end{abstract}
\section{Introduction}

Large language models (LLMs) have become indispensable tools across various fields, from creative industries to scientific research, but their immense size incurs a tremendous amount of memory, computation, and energy consumption, posing a significant challenge to their widespread deployment \citep{kaplan2020scaling,bommasani2021opportunities,faiz2024llmcarbon}.
Neural network pruning can offer a viable solution to this problem by removing redundant parameters without compromising performance \citep{lecun1989optimal,han2015learning,hoefler2021sparsity}.
Indeed, the research community has responded to this challenge with a surge of innovative methodologies, demonstrating that LLMs can be made more compact and efficient through effective pruning techniques \citep{frantar2023sparsegpt, sunsimple, boza2024fast,meng2024alps,fang2024maskllm,liu2025proxsparse,lee2025safe}.

However, the community is witnessing a major roadblock: current methodologies are failing to push beyond a moderate level of sparsity (roughly 50-60\%) without a significant decline in model performance;
for instance, prior works have highlighted this limitation with rather incremental improvements at high sparsity \citep{meng2024alps,boza2024fast,yin2024outlier,huang2025determining}.

\begin{center}
\emph{Have we truly reached a plateau, or is there a path to continued progress?}    
\end{center}

This work provides a positive answer.
We demonstrate that it is possible to prune LLMs for very high sparsity levels---up to almost 90\%---without significant performance degradation (see \cref{fig:teaser}).

The key to our success is identifying and addressing potentially critical flaws in the current practice.
Specifically, the majority of existing methods relies on the principle of sequential layerwise reconstruction error minimization, an approach proven effective in memory-constrained environments.
However, this approach is inherently prone to propagating compounding errors while enforcing unnecessarily strong conditions and, in fact, seeks only local solutions by design based on a surrogate objective \citep{shin2024rethinking,bai2024sparsellm,huang2025determining}.
On the other hand, we suggest finding more globally optimal solutions directly by formulating a sparsity-constrained optimization problem and developing a robust solver as a whole.

We show that our approach can be applied to a wide range of LLM models and scales from 125M to 13B number of parameters.
Across this range, \ours remains stable in the highly sparse regime, whereas competing methods frequently collapse with order-of-magnitude perplexity blow-ups.
Importantly, these gains translate into practical benefits: we obtain up to $\times 7.80$ memory reduction and $\times 3.98$ inference speedup without degrading model usability.
We provide a flexible implementation as well, which incorporates memory-efficient designs including quantized optimizer states and enables pruning even for 27B-parameter models with 55\% lower memory footprint, demonstrating extended potential at scale.
Based on classic optimization theory, we also provide a convergence guarantee for our solver to ensure theoretical soundness alongside empirical findings.

The full extent of its limits is not yet fully understood.
However, our work clearly demonstrates significant potential for further advancements in LLM pruning.
We believe that this finding calls for a renewed focus on alternative strategies that more faithfully preserve model fidelity, which could include better ways to exchange efficiency for performance, providing practitioners with a wider range of options.

\begin{figure*}[t]
    \centering
    \includegraphics[width=0.55\textwidth]{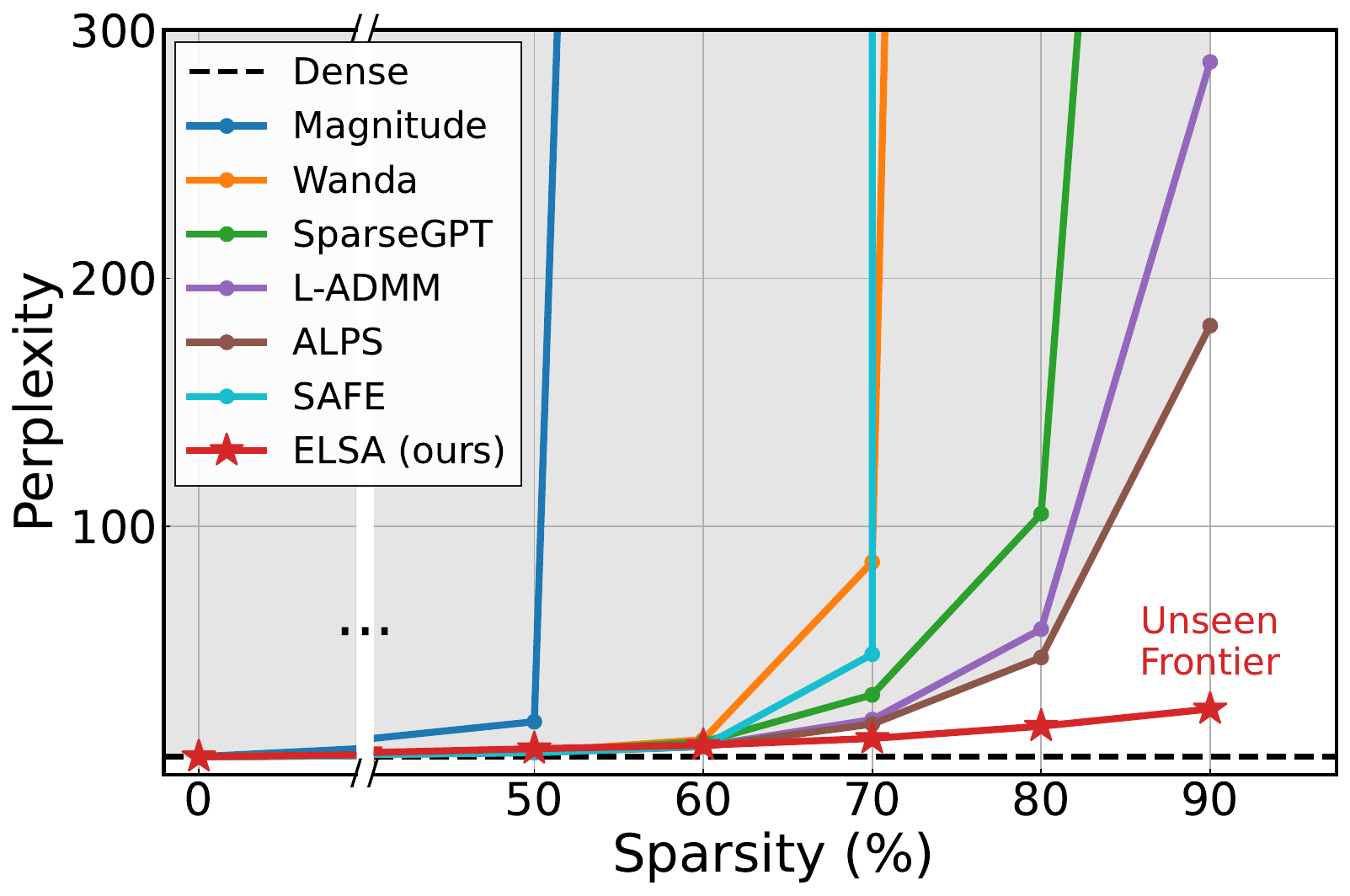}
    \caption{
        Perplexity ($\downarrow$) vs. Sparsity ($\uparrow$) curves for different pruning methods;
        it is measured on the C4 dataset for pruned LLaMA-2-7B models.
        While existing methods start to fail as sparsity increases, our approach (\ours) stays stable without losing much performance, revealing the unseen frontier.
        Previously it was considered nearly impossible to achieve such high sparsity for LLMs or go beyond the ``sparsity wall'' formed around 50-60\% sparsity levels.
        The same trend is observed consistently across different architectures and scales as we will show in \Cref{sec:experiments}--\textsc{Experiments}.
    }
    \label{fig:teaser}
\end{figure*}


    
    

\section{Problem statement}
\label{sec:Problem statement}

The long-standing research of neural network pruning, aimed at enhancing the efficiency of large models \citep{lecun1989optimal, han2015learning}, has recently made significant progress in its application to LLMs \citep{frantar2023sparsegpt,sunsimple,boza2024fast, liu2025proxsparse}.
While effective, these methods decline sharply and fail to maintain performance beyond a moderate level of sparsity around 50-60\%. 
For example, the recent study of \citet{zhang2024dynamic} to evaluate these methods report that their performance begins to collapse after 70\% sparsity.
This deterioration is also evident in other recent works that, notwithstanding the relative advantage over existing methods, the majority still suffer from severely degraded performance in high-sparsity regimes, with perplexity often increased more than an order of magnitude \citep{boza2024fast, meng2024alps}. 
In fact, this stands in stark contrast to historical precedents, where extreme sparsity of say 90\% or higher was commonly achieved \citep{frankle2018lottery,lee2019snip}.
Consequently, researchers has begun to theorize the underlying causes, attributing the failure to compounding layer-wise errors and the explosion of reconstruction error \citep{shin2024rethinking, huang2025determining}.

These findings have collectively fostered a narrative that achieving high sparsity in language models is an illusional goal. 
We argue, however, that this ``sparsity wall'' is perhaps not an inherent limitation but rather an artifact of ill-defined problem formulation.

To analyze, let us begin by showing that pruning can be formulated most generally as a constrained optimization problem as follows:
\begin{equation}\label{eq:pruning}
    x^\star = \argmin \ f(x) \quad \text{subject to} \quad \|x\|_0 \leq k    
\end{equation}
where $x \in \R^d$ refers to the optimization variable (\ie, parameters of a neural network), $f$ denotes the minimization objective (\eg, cross-entropy loss for next token prediction), and $k$ is the number of parameters to preserve after pruning.
\Ie, the successful processing of (\ref{eq:pruning}) will yield a solution $x^\star$ that is sparse and keeps prediction performance.

However, the majority of LLM pruning methods takes an approach of the following form:
\begin{equation}
\label{eq:rem}
    x^\star = \{x_i^\star \ \ \texttt{for} \ \ i=1,\dots,L \} \quad \text{where} \quad 
    x_i^\star = \argmin \ \tilde{f}(x_i) \quad \text{subject to} \quad \|x_i\|_0 \leq k_i
\end{equation}
where $L$ refers to the number of some modularized parts of the network model--most typically layers--and $\tilde{f}$ denotes a module-wise surrogate objective that measures reconstruction error;
precisely, the reconstruction error here is defined to be
\begin{equation}
\label{eq:re}
    \tilde{f} \coloneqq \E_{\mathcal{D}} \| {\bar{x}_i}^\top g(x_{i-1}; \mathcal{D}) - x_i^\top g(x_{i-1}; \mathcal{D}) \|^2
\end{equation}
where $g(x_{i-1};\cdot)$ and $\bar{x}$ denote the activations of the previous layer and the $i$-th layer of the pre-trained dense model, respectively, and $\mathcal{D}$ refers to some calibration data.
Thus, the model is split into submodels, and each submodel is pruned so as to match or reconstruct the predictions of the dense counterpart on some data, sequentially until the last submodel.
The solution is then obtained by simply stacking these sparse submodels.

We posit that this approach (\ref{eq:rem}), so-called layer-wise reconstruction error minimization, introduces non-trivial and potentially critical limitations.
Specifically, we highlight three potential pitfalls: (i) errors from approximate layer-wise solutions, (ii) suboptimality in model-wide reconstruction, and (iii) the surrogacy in the objective.
We elaborate these as below.

First of all, it is hard to solve (\ref{eq:rem}) exactly without errors, in other words, the distance (\ref{eq:re}) cannot be  zero realistically.
This is due to the high cost of exactly solving sparse linear regression \citep{natarajan1995sparse}.
In fact, this leads to layer-wise solvers relying on saliency-based heuristics to find approximate solutions \citep{frantar2023sparsegpt, sunsimple, meng2024alps}.
Without zero layer-wise reconstruction errors, even small errors from each layer can compound into large overall errors, which has been observed to pose non-trivial harm to performance \citep{shin2024rethinking, huang2025determining}.

Also, its sequential, layer-wise design is naturally restrictive, potentially introducing suboptimality.
By enforcing the layer-wise features to match those of a pre-trained network, it effectively restricts the search space of the potential solutions, even though no guarantee exists that the optimal sparse model would necessarily respect this requirement.
Further concern stems from its independent and sequential nature; the layers are never jointly optimized, and notably, earlier layers will remain fixed even when subsequent layers change regardless of the potential suboptimality it introduces.

Lastly—and perhaps quite fundamentally—its reliance on a surrogate objective $\tilde{f}$ implies that one cannot expect to obtain a solution on (\ref{eq:pruning}) even after perfectly solving (\ref{eq:rem}).
This stands in direct opposition to the underlying goal of achieving a perfect, zero error solution on (\ref{eq:rem}), whereas, in reality, it may simply lead to overfitting, failing the true objective (\ref{eq:pruning}) of preserving the language modeling capabilities.
We expect these core issues to act as a barrier as we seek higher sparsity levels.

\section{Method}

We propose \ours (\underline{E}xtreme \underline{L}LM sparsity via \underline{S}urrogate-free \underline{A}DMM) to directly solve (\ref{eq:pruning}). 
We ground our approach in optimization from both first-principle and advanced techniques in order to better ensure that (\ref{eq:pruning}) is properly solved while enhancing effectiveness specifically for LLMs.

\subsection{Surrogate-free LLM sparsification via ADMM}
We solve (\ref{eq:pruning}) using the alternating direction method of multipliers (ADMM, \citet{boyd2011distributed}), a strategy involving variable splitting to decouple the intractable sparsity constraint $\mathcal{S} = \{ v \in \mathbb{R}^d \mid \|v\|_0 \leq k \}$ from the training objective. 
This is done by introducing an auxiliary variable $z$ in the following manner:
\begin{equation} \label{eq:admm_split}
\min_{x,z} f(x) + I_{\mathcal{S}}(z) \quad \text{s.t.} \quad x=z,
\end{equation}
where $I_{\mathcal{S}}(z)$ is the indicator function for the set $\mathcal{S}$:
\begin{equation}
    I_{\mathcal{S}}(z) := 
    \begin{cases}
        0 & \text{if } z \in \mathcal{S} \\
        \infty & \text{otherwise.}
    \end{cases}
\end{equation}
In turn, we keep $x$ constrained to be equal to $z$.
This allows us to handle the model training and the sparsity satisfaction somewhat separated, making both much easier to handle.

To solve for this new formulation, the augmented Lagrangian can be used:
\begin{equation}
    \label{eq:augmented_lagrangian}
    \mathcal{L}_{\lambda}(x, z, u) = f(x) + I_{\mathcal{S}}(z) + \frac{\lambda}{2} \|x - z + u\|_2^2 - \frac{\lambda}{2}\|u\|_2^2 \ ,
\end{equation}
where $\lambda$ is the hyperparameter for adjusting the strength of the proximal penalty, and $u$ is a scaled dual variable.
ADMM solves this by alternating between minimizing the augmented Lagrangian over the primal variables ($x,z$) and performing a dual ascent step on $u$.
This decomposes the problem into three manageable subproblems that are iterated until convergence:
\begin{align}
    x^{t+1} &= \argmin_x \left( f(x) + \frac{\lambda}{2}\|x-z^t+u^t\|^2_2 \right) \label{eq:x_update} \ , \\
    z^{t+1} &= \argmin_{z\in \mathcal{S}} \frac{\lambda}{2}\|x-z^t+u^t\|^2_2  = \Pi_{\mathcal{S}}(x^{t+1}+u^t) \ , \label{eq:z_update} \\
    u^{t+1} &=u^t + x^{t+1} - z^{t+1} \ . \label{eq:u_update}
\end{align}
The $x$-update (\ref{eq:x_update}) accounts for minimizing the training objective, and is iteratively minimized while $x$ is pushed closer to the sparse $z$.
The $z$-update (\ref{eq:z_update}) can be expressed as the projection $\Pi_{\mathcal{S}}(x^{t+1}+u^t)$.
Here, the objective associated with its $\mathcal{S}$ is simplified to minimizing the Euclidean distance from $x^{t+1}+u^t$, effectively replacing the complex, non-convex $f$ with a tractable, convex quadratic function.
As a result, this has an exact closed-form solution computable by zeroing out the $(d-k)$-entries with the smallest magnitude \citep{lee2025safe}.
Finally, the scaled dual variable $u$ is updated in (\ref{eq:u_update}) to maximize the augmented Lagrangian via a single step of gradient ascent.


\subsection{Objective-aware projection}
\label{subsec:generalized projection}
Closely inspecting the projection step in the $z$-update (\ref{eq:z_update}), one can see that the Euclidean distance is far too removed from $f$.
Thus, it is reasonable to expect that the sparse parameters obtained in $z$ may differ considerably from the actual sparse optima of $f$.

This motivates us to align the projection step with $f$ by modifying its objective into the following quadratic:
\begin{equation}
\label{eq:projection_step}
z^{t+1} = \operatorname*{arg\,min}_{z \in \mathcal{S}} 
\frac{1}{2}(z-(x^{t+1} + u^t))^\top 
\mathbf{H} \,(z-(x^{t+1}+u^t)),
\end{equation}
where $\mathbf{H}$ is the Hessian of $f$.
Equivalently, we project in the $\mathbf{H}$ induced norm, aligning the step with the second-order geometry of $f$.
Placed once again in the context of pruning research, its advantages would be akin to those of the family of approaches based on the Optimal Brain Surgeon algorithm \citep{lecun1989optimal}.

In practice, two approximations are introduced.
We notice that the procedural simplicity in the Euclidean case stems from the objective being separable across entries.
We found that using $\text{Diag}(\mathbf{H})$ allows us to retain this simplicity while still keeping the benefits by zeroing the entries with the smallest contribution to the objective rather than by their magnitudes.
Also, we employ the Gauss-Newton approximation of the Hessian or the empirical Fisher information matrix $\hat{\mathbf{F}}$, which allows us to obtain a good approximation of the Hessian only by the outer products of the gradients \citep{martens2020new}.
The results of these can be summarized into the following formula:
\begin{equation}
\label{eq:projection_step}
    z^{t+1} = \operatorname*{arg\,min}_{z \in \mathcal{S}} 
    \sum_{i\leq d} \mathbf{\hat{F}}_{ii} \,(z_i-(x_i^{t+1}+u_i^t))^2,
\end{equation}
where each coordinate $i$ contributes independently to this new loss function.
Luckily, the standard Adam optimizer has already made $\hat{\mathbf{F}}$ available for free via its second-moment estimates, requiring no additional cost in implementing this enhancement \citep{li2025fishers}.
Overall, this tailors our algorithm \ours to better adapt to the complex objective of LLMs, and in a way that incurs negligible additional cost.

\subsection{Scalable ADMM via low-precision states}
\label{subsec:elsal}
We further enhance scalability by proposing \oursq.
Here, we rely on two core operations: a quantization operation, $\mathcal{Q}$, that maps high-precision tensors to a compact low-precision representation, and a dequantization operation, $\mathcal{R}$, that rematerializes them.

Formally, for a high-precision tensor $z \in \mathbb{R}^d$, the $\mathcal{Q}$ operation produces a storable pair $(z_q, s)$ consisting of a quantized tensor and a scale:
\begin{equation}
\label{eq:quant_op}
\mathcal{Q}(z) \triangleq (z_q, s), \quad \text{where } \enspace s = \max(|z|)/v_{\max} \enspace \text{and} \enspace z_q = \text{round}\left(z/s\right).
\end{equation}
Here, $v_{\max}$ is the maximum representable absolute value of the target data type (\eg, $127$ for signed \texttt{INT8}).
Conversely, the $\mathcal{R}$ operation rematerializes the high-precision tensor from the stored pair:
\begin{equation}
\label{eq:dequant_op}
\mathcal{R}(z_q, s) \triangleq s \cdot z_q.
\end{equation}
These operations are applied in a cycle to manage the auxiliary variables.
After a high-precision update yields an intermediate state, for instance $z^{t+1} = \Pi_\mathcal{S}(x^{t+1}+u^t)$, it is quantized for efficient storage: $(z_q^{t+1}, s^{t+1}) = \mathcal{Q}(z^{t+1})$. 
This transition yields substantial memory savings; for instance, storing a state in \texttt{FP8} ($8$ bits) reduces the memory footprint by $4\times$ compared to the standard \texttt{FP32} representation ($32$ bits).
The overhead from the scale factor is negligible, as typically only a single $32$-bit scale value is stored for the entire tensor.
For the subsequent computation, the state is first rematerialized to high precision: $\hat{z}^{t+1} = \mathcal{R}(z_q^{t+1}, s^{t+1})$.

This quant-dequant cycle, which bridges low-precision storage with high-precision updates via a dynamic, data-aware scale, is a general and established principle in low-precision deep learning \citep{gholami2022survey}.
The specific definitions in (\ref{eq:quant_op}) can be adapted for various formats, including both $8$-bit integers (\texttt{INT8}) \citep{jacob2018quantization} and modern floating-point types like \texttt{FP8}, representing a cornerstone of efficient numerical methods \citep{micikevicius2022fp8}.

However, this introduces nontrivial changes into the algorithm, and thus, the guarantees of ADMM do not automatically extend.
We therefore establish a proof to demonstrate that \oursq, alongside with \ours, will converge to the solution of (\ref{eq:pruning}) in the following section.



\vspace{-0.5em}
\section{Convergence analysis}
\label{sec:gpa_conv}
We establish theoretical convergence for both \ours and \oursq to support their reliability in directly solving (\ref{eq:pruning}).
Formally, we assume the following:

\begin{assumption} \label{assump:lowbound}
    (Lower bounded on constraint)  The function $ f $ is lower bounded on $ \mathcal{S} $. That is, there exists a constant $ f_{\min} := \min_{a \in \mathcal{S}} f(a) $ and $ f_{\min} > -\infty $.
\end{assumption}
\begin{assumption} \label{assump:smooth}
    ($\beta$-smoothness) The function $ f $ is differentiable, and its gradient is $\beta$-smooth. That is, $\|\nabla f(x) - \nabla f(y)\| \leq \beta \|x - y\|$
\end{assumption}
\begin{assumption} \label{assump:weakcvx}
    ($\mu$-weak convexity) There exists a constant $ \mu \geq 0 $ such that $ f $ is $ \mu $-weakly convex. i.e., $f(x) + \frac{\mu}{2} \|x\|^2$ is convex.
\end{assumption}

Also, we rely on the notion of $\lambda$-stationarity \citep{huang21ADMMQ}:
\begin{definition} \label{def:stationary}
    ($\lambda$-stationary point) We say a point $ \bar{x} $ is a $\lambda$-stationary point of the optimization problem (\ref{eq:pruning}) if $\bar{x} \in \arg \min _{x\in \mathcal{S}}\left\|x-\left(\bar{x}-\lambda^{-1} \nabla f(\bar{x})\right)\right\|,$
\end{definition}
\ie, the point $\bar{x}$ cannot be locally improved using projected gradient descent with step-size $\lambda^{-1}$.

Given these, we present the convergence of \ours and \oursq as follows:


\begin{corollary} \label{cor:gpa_conv}
    (Convergence of \ours)
    Suppose that Assumptions \ref{assump:lowbound}-\ref{assump:weakcvx} hold.
    Assume further that $\lambda$ is chosen large enough so that $\lambda^{-1}\beta^2-(\lambda-\mu)/2<0$.
    Let $(\bar{x}, \bar{z}, \bar{u})$ be a limit point of \ours algorithm.
    Then $\bar x$ is a $\lambda$-stationary point of (\ref{eq:pruning}).
\end{corollary}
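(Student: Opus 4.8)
The target is a standard ADMM-for-nonconvex-composite-problems convergence result, and the natural route is the Lyapunov / sufficient-decrease argument for the augmented Lagrangian $\mathcal{L}_\lambda$. I would first establish a \emph{sufficient decrease} lemma: show that along the iterates produced by (\ref{eq:x_update})--(\ref{eq:u_update}), the value $\mathcal{L}_\lambda(x^{t+1}, z^{t+1}, u^{t+1})$ decreases by at least a constant multiple of $\|x^{t+1}-x^t\|^2$ (or $\|z^{t+1}-z^t\|^2$). The three ingredients are: (a) the $x$-update makes $\mathcal{L}_\lambda$ decrease in $x$ — here $\mu$-weak convexity of $f$ plus the $\frac{\lambda}{2}\|\cdot\|^2$ proximal term gives a $(\lambda-\mu)$-strongly convex subproblem, so the decrease is at least $\frac{\lambda-\mu}{2}\|x^{t+1}-x^t\|^2$; (b) the $z$-update is an exact minimization over $z\in\mathcal S$, so it cannot increase $\mathcal{L}_\lambda$; (c) the $u$-update \emph{increases} $\mathcal{L}_\lambda$, and the increase must be controlled by the dual residual. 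Step (c) is the crux: the standard trick is to use the optimality condition of the $x$-update, $\nabla f(x^{t+1}) + \lambda(x^{t+1}-z^t+u^t) = 0$, which together with $u^{t+1}-u^t = x^{t+1}-z^{t+1}$ yields $\lambda u^{t+1} = -\nabla f(x^{t+1})$; hence $\lambda\|u^{t+1}-u^t\| = \|\nabla f(x^{t+1})-\nabla f(x^t)\| \le \beta\|x^{t+1}-x^t\|$ by $\beta$-smoothness. The $u$-term of $\mathcal{L}_\lambda$ changes by exactly $\frac{1}{\lambda}\|\lambda u^{t+1}-\lambda u^t\|^2 = \lambda\|u^{t+1}-u^t\|^2 \le \lambda^{-1}\beta^2\|x^{t+1}-x^t\|^2$. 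Combining, the net per-step change of $\mathcal{L}_\lambda$ is at most $-\big(\frac{\lambda-\mu}{2} - \lambda^{-1}\beta^2\big)\|x^{t+1}-x^t\|^2$, which is strictly negative precisely under the hypothesis $\lambda^{-1}\beta^2 - (\lambda-\mu)/2 < 0$.

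Next I would establish that $\mathcal{L}_\lambda$ is \emph{bounded below} along the trajectory. Writing $\mathcal{L}_\lambda(x^t,z^t,u^t) = f(x^t) + I_{\mathcal S}(z^t) + \frac{\lambda}{2}\|x^t-z^t+u^t\|^2 - \frac{\lambda}{2}\|u^t\|^2$, I substitute $\lambda u^t = -\nabla f(x^t)$ and $x^t-z^t+u^t = u^{t+1}$, so the cross terms telescope into something expressible via $f$ and gradient norms; using $\beta$-smoothness and Assumption~\ref{assump:lowbound} one bounds this below by $f_{\min}$ minus a controlled constant. Together with the monotone decrease, $\{\mathcal{L}_\lambda^t\}$ converges, and summing the sufficient-decrease inequality gives $\sum_t \|x^{t+1}-x^t\|^2 < \infty$, hence $x^{t+1}-x^t\to 0$, and via the residual bounds $u^{t+1}-u^t\to 0$ and $z^{t+1}-z^t\to 0$ as well (so the primal residual $x^{t+1}-z^{t+1} = u^{t+1}-u^t \to 0$).

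Finally, I pass to a limit point. Let $(\bar x,\bar z,\bar u)$ be a limit of a subsequence $(x^{t_j},z^{t_j},u^{t_j})$. From $x^t-z^t\to 0$ we get $\bar x = \bar z \in \mathcal S$. The $z$-update is $z^{t+1} \in \arg\min_{z\in\mathcal S}\|z-(x^{t+1}+u^t)\|$ (I work with the plain-Euclidean projection as in (\ref{eq:z_update}); the objective-aware and quantized variants are handled in the referenced theorem this corollary specializes), i.e. $\|z^{t+1}-(x^{t+1}+u^t)\| \le \|w-(x^{t+1}+u^t)\|$ for all $w\in\mathcal S$. Substituting $\lambda u^t = -\nabla f(x^t)$, the center point $x^{t+1}+u^t$ converges along the subsequence to $\bar x - \lambda^{-1}\nabla f(\bar x)$ (using $\beta$-smoothness for continuity of $\nabla f$ and $x^{t+1}-x^t\to 0$), and $z^{t+1}\to\bar x$; taking $j\to\infty$ in the inequality, and using that $\mathcal S$ is closed so the $\arg\min$ is attained and lower semicontinuous in the parameter, yields $\bar x \in \arg\min_{x\in\mathcal S}\|x-(\bar x-\lambda^{-1}\nabla f(\bar x))\|$, which is exactly $\lambda$-stationarity (\cref{def:stationary}).

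\textbf{Main obstacle.} The delicate point is step (c) / the lower-boundedness argument: one must be careful that $I_{\mathcal S}$ is only lower semicontinuous (not continuous), so passing to the limit requires lsc arguments rather than plain continuity, and the cancellation $\lambda u^{t+1}=-\nabla f(x^{t+1})$ must be used consistently to convert dual quantities into gradient differences that $\beta$-smoothness can bound. Keeping track of the exact constant so that the sign condition $\lambda^{-1}\beta^2-(\lambda-\mu)/2<0$ is what makes the decrease strict (and no slack is lost) is where the computation has to be done honestly rather than waved through.
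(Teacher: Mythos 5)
Your overall route is the same as the paper's: the paper proves the inexact/quantized statement (\cref{thm:gpaq_conv}, via the lemmas in \cref{app: proof_qgpa}) and obtains the corollary as its exact special case ($\gamma=0$, no quantization), using precisely the Lyapunov/sufficient-decrease argument on $\mathcal{L}_\lambda$ with the same constant $\lambda^{-1}\beta^2-(\lambda-\mu)/2$ that you derive. There is, however, one genuine flaw in your sketch as written: the key identity $\lambda u^{t+1}=-\nabla f(x^{t+1})$ does not follow from the updates you cite. With the main-text ordering (\ref{eq:x_update})--(\ref{eq:u_update}), the $x$-update optimality condition reads $\nabla f(x^{t+1})+\lambda(x^{t+1}-z^{t}+u^{t})=0$, and combining it with $u^{t+1}=u^{t}+x^{t+1}-z^{t+1}$ gives $\lambda u^{t+1}=-\nabla f(x^{t+1})+\lambda(z^{t}-z^{t+1})$, i.e.\ there is a leftover term $\lambda(z^{t}-z^{t+1})$ that the projection step does not allow you to control by $\|x^{t+1}-x^{t}\|$. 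Hence your bound $\lambda\|u^{t+1}-u^{t}\|\le\beta\|x^{t+1}-x^{t}\|$, and with it the dual-ascent increase bound $\lambda^{-1}\beta^{2}\|x^{t+1}-x^{t}\|^{2}$ that exactly matches the hypothesis, is not justified in that ordering. The fix is the one the paper implicitly uses in Algorithm~\ref{alg: gpaq}: update $z$ first, solve the $x$-subproblem against the \emph{current} iterate $z^{t+1}$, then take the dual step; the dual variable equals $-\nabla f(x^{t+1})$ only when the smooth block is the last primal block minimized before the dual ascent. With that reordering (or by explicitly bounding the extra $z$-difference term, which your sketch does not do) your chain of inequalities goes through verbatim.

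Apart from this, the remaining steps are sound and essentially coincide with the paper's: the lower bound on $\mathcal{L}_\lambda$ reduces to $f(x)+\langle\nabla f(x),z-x\rangle+\tfrac{\lambda}{2}\|x-z\|^2\ge f(z)\ge f_{\min}$, which needs $\lambda\ge\beta$, and this is indeed implied by the stated hypothesis (it forces $\lambda^2>2\beta^2$), so no extra assumption is hidden there. Your limit-point step—passing to the limit in the inequality $\|z^{t+1}-(x^{t+1}+u^{t})\|\le\|w-(x^{t+1}+u^{t})\|$ for all $w\in\mathcal{S}$ using only closedness of $\mathcal{S}$—is in fact cleaner than the appendix argument, which invokes finiteness of $\mathcal{S}$ (a property of the quantized setting it was adapted from, not of the sparsity constraint set); so that part of your plan is an improvement rather than a gap.
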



\begin{theorem} \label{thm:gpaq_conv}
(Convergence of \oursq)
Suppose that Assumptions \ref{assump:lowbound}-\ref{assump:weakcvx} hold.
Also assume that the iterates of  \oursq are bounded, and the constant $\lambda$ and $\gamma$ are chosen such that 
\begin{equation}
    \nonumber
     \frac{\beta^2}{\lambda} + \frac{\beta(\lambda + \beta) \gamma}{\lambda}  + \frac{{\gamma}^2(\lambda+\beta)}{2}-\frac{(1-{\gamma})^2(\lambda-\mu)}{2} < 0.
     \label{eq: gpaq_cond_gamma}
\end{equation}
Then, for any limit point $(\bar{x}, \bar{z}, \bar{u})$ of the iterates, $\xb$ is a $\lambda$--stationary point of (\ref{eq:pruning}).
\end{theorem}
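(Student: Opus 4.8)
The plan is to follow the nonconvex--ADMM template behind \cref{cor:gpa_conv} (as in \citet{huang21ADMMQ} and related proximal--ADMM analyses), while propagating through every step the extra perturbations created by the quant--dequant cycle \eqref{eq:quant_op}--\eqref{eq:dequant_op}. First I would fix a quantization error model: round-to-nearest in \eqref{eq:quant_op} gives $\|\mathcal{R}(\mathcal{Q}(w)) - w\|_\infty \le \max|w|/(2v_{\max})$, hence $\|\mathcal{R}(\mathcal{Q}(w)) - w\| \le \tfrac{\sqrt d}{2 v_{\max}}\|w\|$; absorbing the bit-width/dimension constant into $\gamma$, I write the used iterates as $\hat z^t = z^t + e_z^t$, $\hat u^t = u^t + e_u^t$ with $\|e_z^t\| \le \gamma\|z^t\|$ and $\|e_u^t\| \le \gamma\|u^t\|$, all uniformly bounded by the standing boundedness hypothesis. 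I then record the perturbed first-order conditions of the subproblems: $\nabla f(x^{t+1}) + \lambda(x^{t+1} - \hat z^t + \hat u^t) = 0$ for the $x$-step, $z^{t+1} = \Pi_{\mathcal{S}}(x^{t+1} + \hat u^t)$ for the $z$-step (note $\hat z^{t+1} \in \mathcal{S}$, since rounding preserves zeros), and $u^{t+1} = \hat u^t + x^{t+1} - z^{t+1}$ for the dual step.

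\textbf{Sufficient decrease.} Next I would set up a Lyapunov function $\Phi_t := \mathcal{L}_\lambda(x^t, \hat z^t, \hat u^t) + c\,\|z^t - z^{t-1}\|^2$ and prove $\Phi_{t+1} \le \Phi_t - \kappa\big(\|x^{t+1}-x^t\|^2 + \|z^{t+1}-z^t\|^2\big)$ for some $\kappa>0$. The $x$-step decreases $\Phi$ by roughly $\tfrac{\lambda-\mu}{2}\|x^{t+1}-x^t\|^2$ via $(\lambda-\mu)$-strong convexity of $f(\cdot)+\tfrac\lambda2\|\cdot-\hat z^t+\hat u^t\|^2$ under $\mu$-weak convexity; the $z$-step, being an exact projection, is non-increasing in the associated quadratic; and the dual step raises $\Phi$ by $\lambda\|u^{t+1}-u^t\|^2$ (up to quantization slack), which the perturbed $x$-optimality condition and $\beta$-smoothness bound by about $\tfrac{\beta^2}{\lambda}\|x^{t+1}-x^t\|^2$. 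Re-expressing the quantized quantities $\hat z^t,\hat u^t$ in terms of the exact successive differences through triangle/Young inequalities injects the $(1-\gamma)^2$ degradation of the $x$-step gain together with cross terms of order $\tfrac{\beta(\lambda+\beta)\gamma}{\lambda}$ and $\tfrac{\gamma^2(\lambda+\beta)}{2}$ times squared successive differences. Collecting coefficients, the net multiplier of $\|x^{t+1}-x^t\|^2$ is exactly the left-hand side of the stated inequality, so the hypothesis $\tfrac{\beta^2}{\lambda}+\tfrac{\beta(\lambda+\beta)\gamma}{\lambda}+\tfrac{\gamma^2(\lambda+\beta)}{2}-\tfrac{(1-\gamma)^2(\lambda-\mu)}{2}<0$ is precisely what yields $\kappa>0$, and it collapses to the \ours\ condition $\lambda^{-1}\beta^2-(\lambda-\mu)/2<0$ when $\gamma=0$.

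\textbf{Lower bound, summability, and the limit.} Then I would show $\Phi_t$ is bounded below: $f$ is bounded below on $\mathcal{S}$ by \cref{assump:lowbound} and each $z^t\in\mathcal{S}$, while the proximal and $-\tfrac\lambda2\|u\|^2$ contributions are controlled by solving the perturbed $x$-optimality relation for $\lambda u^{t+1}$, which expresses $u^{t+1}$ through $\nabla f(x^{t+1})$ and bounded quantities ($\nabla f$ is bounded on the bounded orbit by $\beta$-smoothness). Telescoping the descent inequality gives $\sum_t\|x^{t+1}-x^t\|^2<\infty$, hence, via the $u$-bound and by writing $z^{t+1}-z^t$ as a difference of projections controlled by $x$- and $u$-increments, $\|x^{t+1}-x^t\|,\|z^{t+1}-z^t\|,\|u^{t+1}-u^t\|\to0$. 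Along a subsequence with $(x^t,\hat z^t,\hat u^t)\to(\xb,\bar z,\bar u)$: from $u^{t+1}-u^t = x^{t+1}-z^{t+1}\to0$ we get $\xb=\bar z$; passing to the limit in $z^{t+1}=\Pi_{\mathcal{S}}(x^{t+1}+\hat u^t)$ via closedness of the graph of the hard-thresholding map gives $\bar z\in\Pi_{\mathcal{S}}(\xb+\bar u)$; and passing to the limit in the $x$-optimality condition gives $\nabla f(\xb)=-\lambda\bar u$, i.e.\ $\bar u=-\lambda^{-1}\nabla f(\xb)$. Substituting, $\xb=\bar z\in\Pi_{\mathcal{S}}\big(\xb-\lambda^{-1}\nabla f(\xb)\big)=\argmin_{x\in\mathcal{S}}\|x-(\xb-\lambda^{-1}\nabla f(\xb))\|$, exactly $\lambda$-stationarity in the sense of \cref{def:stationary}.

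\textbf{Main obstacle.} The delicate point is making the last step genuinely clean: the residuals $e_z^t,e_u^t$ are bounded by a \emph{fixed} $\gamma$, not a vanishing one, so a priori the limiting $x$-condition reads $\nabla f(\xb)=-\lambda(\bar u+\bar e)$ with a leftover $\bar e$. Discharging this requires showing the residuals actually die at the limit --- e.g.\ because the quantization scale $s^t\to0$ on the active support, or because the summability in the descent step can be strengthened to cover the residual sequence, or via a mild error-feedback / precision-annealing mechanism --- without which the honest conclusion would only be a $(\lambda,\gamma)$-approximate stationary point. The second, purely technical difficulty is the bookkeeping in the descent lemma: choosing the Lyapunov weight $c$ and the Young-inequality splits so the several $\gamma$-cross terms consolidate \emph{exactly} into the displayed condition rather than a looser sufficient one.
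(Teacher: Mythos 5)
Your high-level template (sufficient decrease of the augmented Lagrangian, a lower bound on it, telescoping to get $\|x^{t+1}-x^t\|\to 0$, then a limit-point argument) is the same as the paper's, and you even anticipate correctly how the four terms of the stated constant arise ($\beta^2/\lambda$ from the dual ascent, the two $\gamma$ cross terms, and the $(1-\gamma)^2$-degraded primal gain). However, there is a genuine gap at the decisive step, and it is precisely the one you flag yourself at the end. You model quantization as a \emph{fixed} relative error on the stored tensors ($\|e_z^t\|\le\gamma\|z^t\|$, $\|e_u^t\|\le\gamma\|u^t\|$). Under that model the perturbations do not vanish along the iterates, the limiting optimality condition reads $\nabla f(\bar x)=-\lambda(\bar u+\bar e)$ with a leftover $\bar e$, and the honest conclusion is only $(\lambda,\gamma)$-approximate stationarity --- which is not what the theorem claims. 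The paper closes exactly this hole by \emph{defining} the analyzed algorithm differently: in its \oursq{} iteration the quantization enters only through the dual/\,$x$ coupling, $\nabla f(x^{t+1})+\mathcal{Q}[u^t+\lambda(x^{t+1}-z^{t+1})]=0$ together with $u^{t+1}=\mathcal{Q}[u^t+\lambda(x^{t+1}-z^{t+1})]$, so that (i) $u^{t+1}=-\nabla f(x^{t+1})$ holds exactly, and (ii) the quantization residual $e^{t+1}$ is controlled via the inexactness condition $\|x^{t+1}-x^{t+1}_\star\|\le\gamma\min\{\|x^{t+1}-z^{t+1}\|,\|x^{t+1}-x^t\|\}$, i.e. $\gamma$ is a \emph{relative accuracy tied to successive iterate differences}, not a per-tensor rounding ratio. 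With Lemma-type strong-convexity bounds this gives $\|e^{t+1}\|\le(\lambda+\beta)\gamma\|x^{t+1}-x^t\|$, so once sufficient decrease yields $\|x^{t+1}-x^t\|\to 0$ the residuals die automatically, $\bar u=-\nabla f(\bar x)$, and the limit point is an exact $\lambda$-stationary point of (\ref{eq:pruning}).

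So the missing idea is not extra bookkeeping but the reinterpretation of $\gamma$ and of where the quantization error is charged: without coupling the error to $\|x^{t+1}-x^t\|$ (or otherwise forcing it to vanish, e.g. by precision annealing, which neither you nor the paper actually carries out), your argument cannot reach the stated conclusion. Two smaller discrepancies: your $z$-step uses $\Pi_{\mathcal S}(x^{t+1}+\hat u^t)$ whereas the analyzed algorithm projects $x^t+\lambda^{-1}u^t$ before the $x$-update (harmless, but it changes which cross terms appear), and for the limit passage the paper argues via finiteness of the admissible projection patterns rather than closedness of the hard-thresholding graph --- your route is fine there, arguably cleaner for the sparsity set. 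As written, though, the proposal proves a weaker (approximate-stationarity) statement than \cref{thm:gpaq_conv}.
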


This demonstrates that \ours and \oursq converge to the stationary point of the sparsity-constrained optimization problem (\ref{eq:pruning}).
The detailed proof for \oursq is provided in \cref{app: proof_qgpa}.

\begin{figure}[t]
    \centering
    \includegraphics[width=0.98\linewidth]{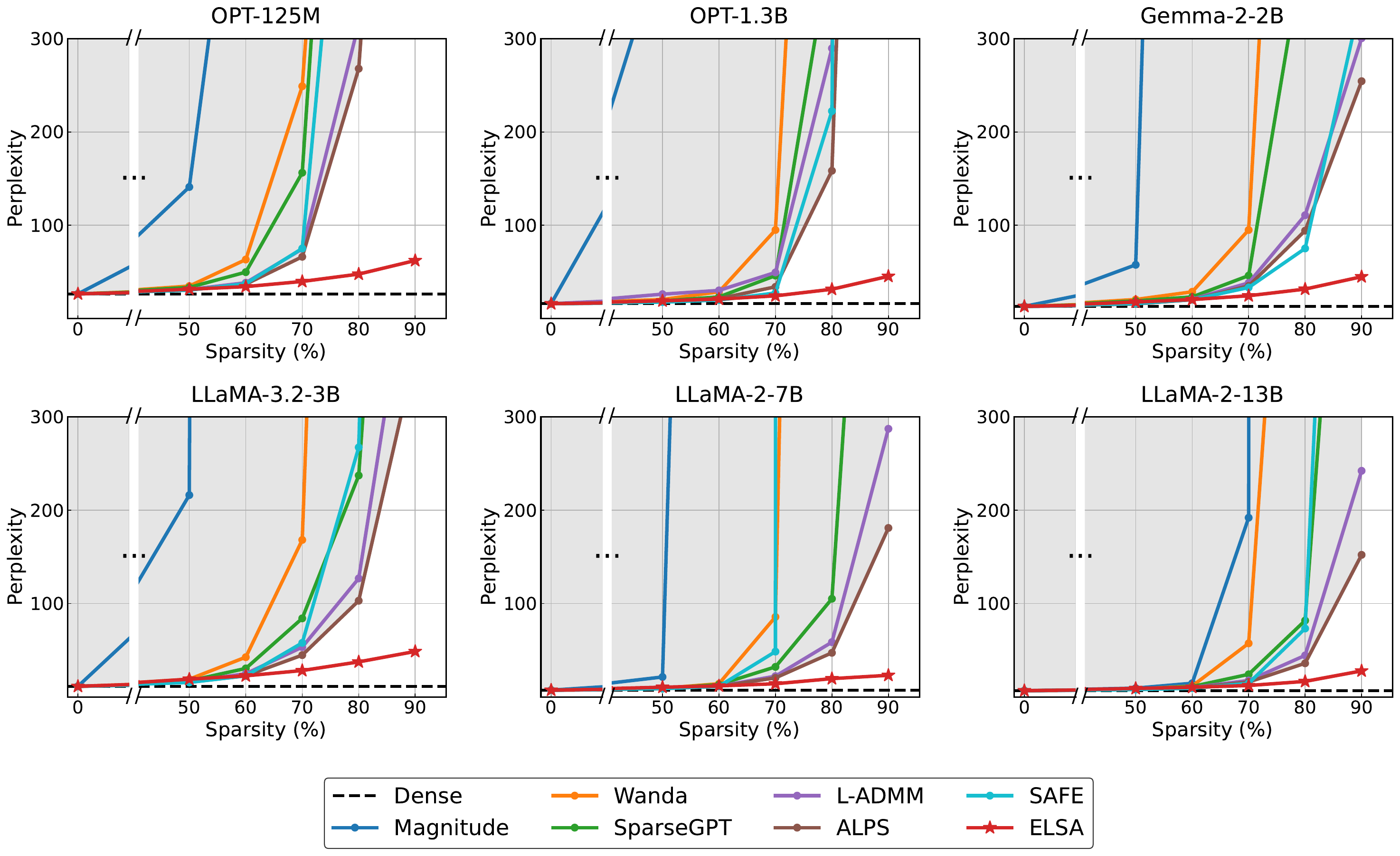}
    \caption{
        Perplexity vs. Sparsity plots for different models and scales.
        \ours preserves much lower perplexity at high sparsity compared to other methods, consistently across a wide range of settings, showing its advantage and robustness.
        All numerical results are provided in \cref{Appendix:Additional results}.}
    \label{fig:main_figure}
\end{figure}

\section{Experiments}
\label{sec:experiments}

We present a series of concrete experiments to validate \ours in this section.
Specifically, we show that \ours
(i) effectively prunes models to extreme high sparsity levels across a wide range of models and scales (\Cref{subsec:main_results}),
(ii) can further push the limits to extreme sparsity (up to 99\%) (\Cref{subsec:extreme_sparsity}), (iii) can accelerate inference while reducing memory requirements (\Cref{subsec:inference_acceleration}), and (iv) scales efficiently to large models up to 27B (\Cref{subsec:scaling_results}).
Lastly, we analyze the cost of \ours compared to existing methods at (\Cref{subsec:cost_analysis}).
We also provide extension to other sparsity patterns such as N:M semi-structured sparsity or non-uniform sparsity, and an ablation study on the choice of objective functions and generalized projection on \Cref{Appendix:Additional results}.

We compare \ours to the following methods: Magnitude \citep{han2015learning}, SparseGPT \citep{frantar2023sparsegpt}, Wanda \citep{sunsimple}, ALPS \citep{meng2024alps}, L-ADMM (Layer-wise ADMM) \citep{boza2024fast}, SAFE \citep{lee2025safe}, and SparseLLM \citep{bai2024sparsellm}.
These methods are applied to four different architectures including OPT \citep{zhang2022opt}, Gemma-2 \citep{team2024gemma}, and LLaMA-2/3 \citep{touvron2023llama,grattafiori2024llama3herdmodels} across a wide range of scales from 125M to 27B.
We report perplexity and zero-shot prediction accuracy of pruned models.
All experiment settings can be found in \cref{Appendix: Experimental details}.
The source code to reproduce results will be made available at \texttt{https://github.com/log-postech/elsa}.


\begin{figure*}[t]
    \centering
    \includegraphics[width=0.45\textwidth]{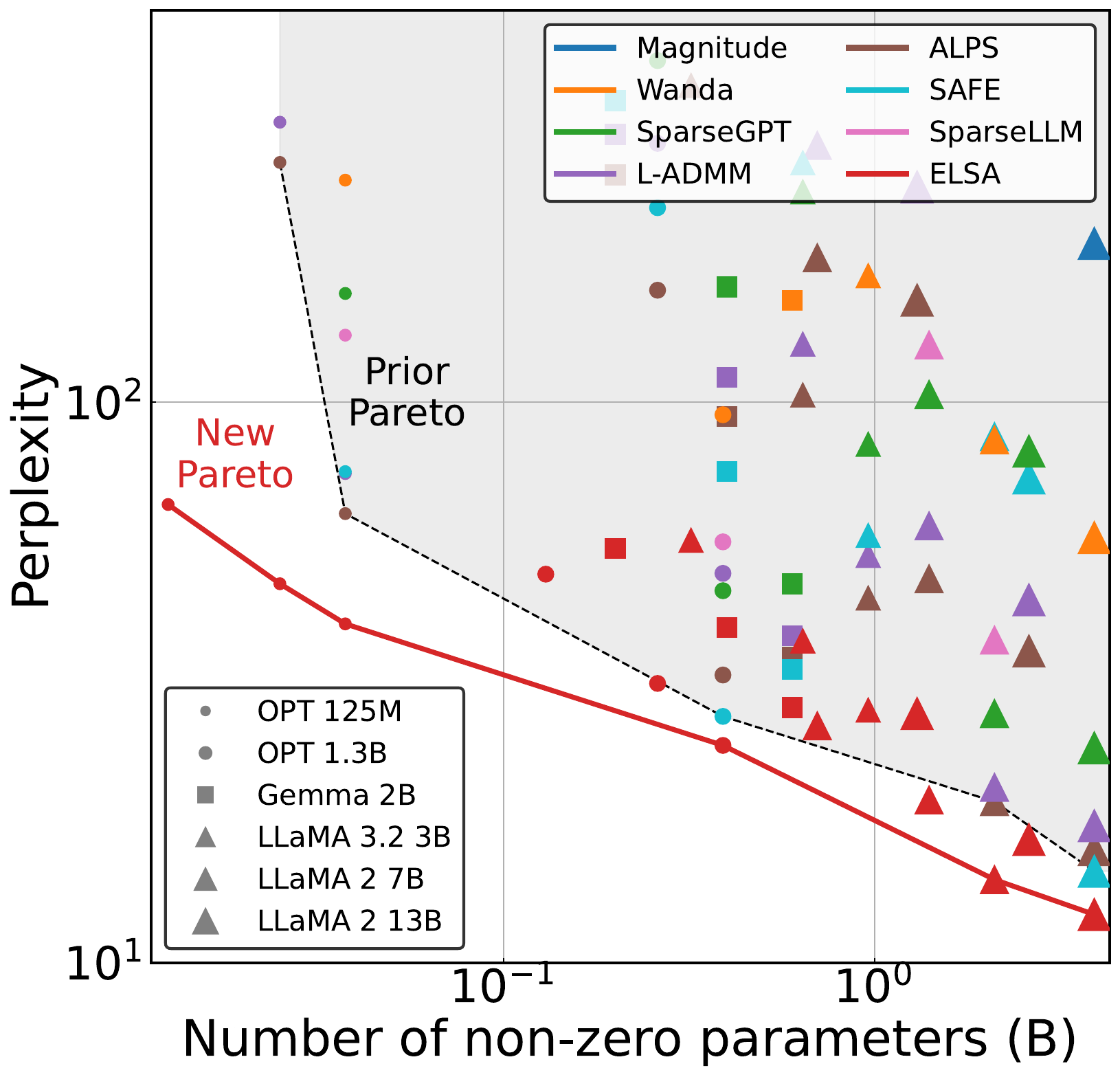}
    \caption{
        Pareto optimality of \ours compared to prior works in terms of 
        perplexity vs. number of non-zero parameters.
        \ours displays its greater optimality across a broad spectrum of effective scales.
    }
    \label{fig:frontier}
\end{figure*}

\subsection{Main results}
\label{subsec:main_results}
\cref{fig:main_figure} reports C4 perplexity for various models across different sparsity levels from 50\% to 90\%.  
Existing methods deteriorate rapidly beyond 70\%; for instance, SparseGPT on OPT-125M rises from 49.83 at 60\% sparsity to over 1,000 at 80\%.  
In contrast, \ours remains stable, increasing only from 42.99 to 47.45 over the same range, and at 80\% sparsity matches the perplexity of SparseGPT at 60\%.  
This robustness holds across scales: on LLaMA-2-13B at 90\% sparsity, \ours achieves 27.84 perplexity, while most existing methods exceed the hundreds.
\cref{fig:frontier} further highlights this trend by plotting perplexity against the effective number of non-zero parameters.  
\ours consistently sets the new Pareto frontier across scales, underscoring its robustness in extreme sparsity regimes.

This extends to downstream task performance, as shown in \cref{fig:main_figure-2}.  
Each radar plot reports per-task accuracy at high sparsity (70–90\%), with the enclosed area reflecting the average accuracy across tasks.  
At 70\% sparsity, \ours is competitive with leading methods, but a clear gap emerges as sparsity increases.  
From 70\% to 80\% sparsity, other methods lose 10–20\%p accuracy on tasks such as Winogrande and ARC-E, while \ours degrades by less than half as much. At 90\%, most methods collapse, whereas \ours retains the highest accuracy on 6 out of 7 tasks, with an average margin of 6.06\%p.
This demonstrates that \ours maintains generalization far better than existing methods at high sparsity.
We believe that these results collectively establish the effectiveness of \ours for high sparsity.

\begin{figure}[t]
    \centering
    \includegraphics[width=0.9\linewidth]{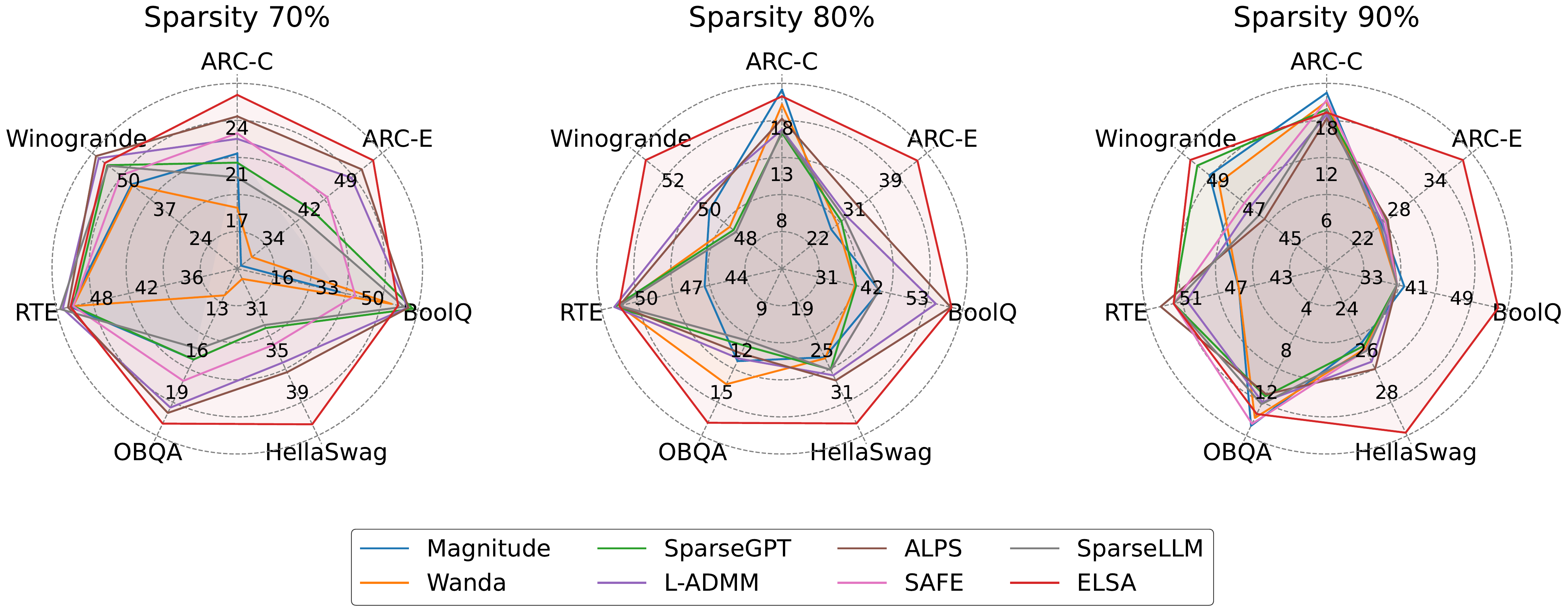}
    \caption{Zero-shot accuracy of pruned LLaMA-2-7B models.
    \ours outperforms other methods for most tasks, with the performance gap widening as sparsity increases, highlighting its strong generalization capability.
    Full numerical results are provided in \cref{tab:llama7b-zeroshot} of Appendix~\ref{Appendix:Additional results}.
    }
    \label{fig:main_figure-2}
\end{figure}

\begin{table}[t]
\centering
\caption{Memory savings and inference accelerations of \ours on LLaMA-2-7B.}
\label{tab:inference_acceleration}
\setlength{\tabcolsep}{3.0pt}
\renewcommand{\arraystretch}{1.05}
\small
\begin{tabular}{l r r l r l r l r l}
\toprule
 & \textbf{Dense}
 & \multicolumn{2}{c}{\textbf{50\%}}
 & \multicolumn{2}{c}{\textbf{70\%}}
 & \multicolumn{2}{c}{\textbf{90\%}}
 & \multicolumn{2}{c}{\textbf{95\%}} \\
\midrule
\textbf{Latency (s)}
& 1.84
& 1.37 & ($\times$1.34)
& 0.95 & ($\times$1.94)
& \textbf{0.72} & \textbf{($\times$2.50)}
& \textbf{0.46} & \textbf{($\times$4.00)} \\

\textbf{Tokens/s}
& 54.47
& 72.81 & ($\times$1.33)
& 104.89 & ($\times$1.93)
& \textbf{139.47} & \textbf{($\times$2.56)}
& \textbf{216.98} & \textbf{($\times$3.98)} \\

\textbf{Memory (MB)}
& 13596
& 8870 & ($\times$1.53)
& 5603 & ($\times$2.42)
& \textbf{2918} & \textbf{($\times$4.60)}
& \textbf{1743} & \textbf{($\times$7.80)} \\
\bottomrule
\end{tabular}
\end{table}

\begin{wraptable}{r}{0.35\textwidth}
    \centering
    \scriptsize
    \vspace{-1.8em}
    \caption{Perplexity on Wiki/C4 at extreme sparsity.}
    \label{tab:extreme_ppl}
    \setlength{\tabcolsep}{4pt}
    \renewcommand{\arraystretch}{1.05}
    \scriptsize
    \begin{tabular}{c l c c}
    \toprule
    \textbf{Sparsity} & \textbf{Method} & \textbf{Wiki} & \textbf{C4} \\
    \midrule
    \multirow{3}{*}{0.90}
    & Wanda + LoRA            & 92.66  & 65.56 \\
    & Wanda + Full & 42.40  & 34.87 \\
    & \ours           & \textbf{26.97} & \textbf{23.14} \\
    \midrule
    \multirow{3}{*}{0.95}
    & Wanda + LoRA            & 371.0  & 143.0 \\
    & Wanda + Full & 84.30  & 53.62 \\
    & \ours           & \textbf{38.91} & \textbf{28.39} \\
    \midrule
    \multirow{3}{*}{0.99}
    & Wanda + LoRA            & 588.3  & 247.5 \\
    & Wanda + Full & 146.37 & 71.64 \\
    & \ours           & \textbf{55.94} & \textbf{40.10} \\
    \bottomrule
    \end{tabular}
\end{wraptable}

\subsection{Towards extreme sparsity}
\label{subsec:extreme_sparsity}
To assess whether \ours remains effective under extreme sparsity, we further evaluate LLaMA 2-7B at 95\% and 99\% sparsity. 
We additionally consider retraining after pruning with Wanda \citep{sunsimple} for the baseline,
and the experimental details can be found at \cref{Appendix:5.2}.

As shown in Table~\ref{tab:extreme_ppl}, \ours consistently achieves substantially lower perplexity on both WikiText and C4 dataset, and the margin increases as sparsity becomes more extreme. 
Notably, at 99\% sparsity, \ours remains stable ($55.94/40.10$), whereas Wanda+full fine-tuning degrades ($146.37/71.64$) and Wanda+LoRA collapses ($588.3/247.5$). 
These results show that \ours remains stable even in this extreme regime, while simple heuristics fail to preserve performance, highlighting the advantage of \ours’s more principled approach.

\subsection{Realizing benefits with extreme sparsity}
\label{subsec:inference_acceleration}


Sparsity is practically meaningful only if it yields real deployment gains beyond reducing parameter count. 
To quantify this, we benchmark end-to-end text generation on LLaMA 2-7B using MACKO \citep{macko2025macko}, a recent Sparse-Matrix Vector multiplication (SpMV) kernel combined with memory-efficient MACKO format, supporting an acceleration and memory savings for sparse models. 
Following MACKO’s standard end-to-end protocol, we convert the sparse models produced by \ours and evaluate on a single NVIDIA RTX 3090, reporting mean latency of token generation (s), throughput (tokens/s), and memory footprint (MB).

Table~\ref{tab:inference_acceleration} shows that sparsity yields clear gains in both speed and memory, with improvements becoming pronounced from 70\% sparsity onward. 
At 90\% sparsity, \ours achieves a $2.50\times$ reduction in end-to-end latency and a $2.56\times$ increase in throughput, together with a $4.60\times$ memory reduction compared to the dense baseline. 
Pushing further to 95\% sparsity provides even stronger efficiency gains (up to $4.00\times$ latency reduction, $3.98\times$ throughput increase, and $7.80\times$ memory reduction). These results validate that the high/extreme-sparsity regime targeted by ELSA is not merely an academic frontier: it enables tangible acceleration and substantial memory savings in memory-constrained settings, particularly benefiting the decoding phase where sparse matrix--vector computation dominates.

\subsection{Scaling to large-r models}
\label{subsec:scaling_results}
\begin{wrapfigure}{r}{0.35\textwidth} 
    \centering
    \vspace{-2em}
    \includegraphics[width=0.35\textwidth]{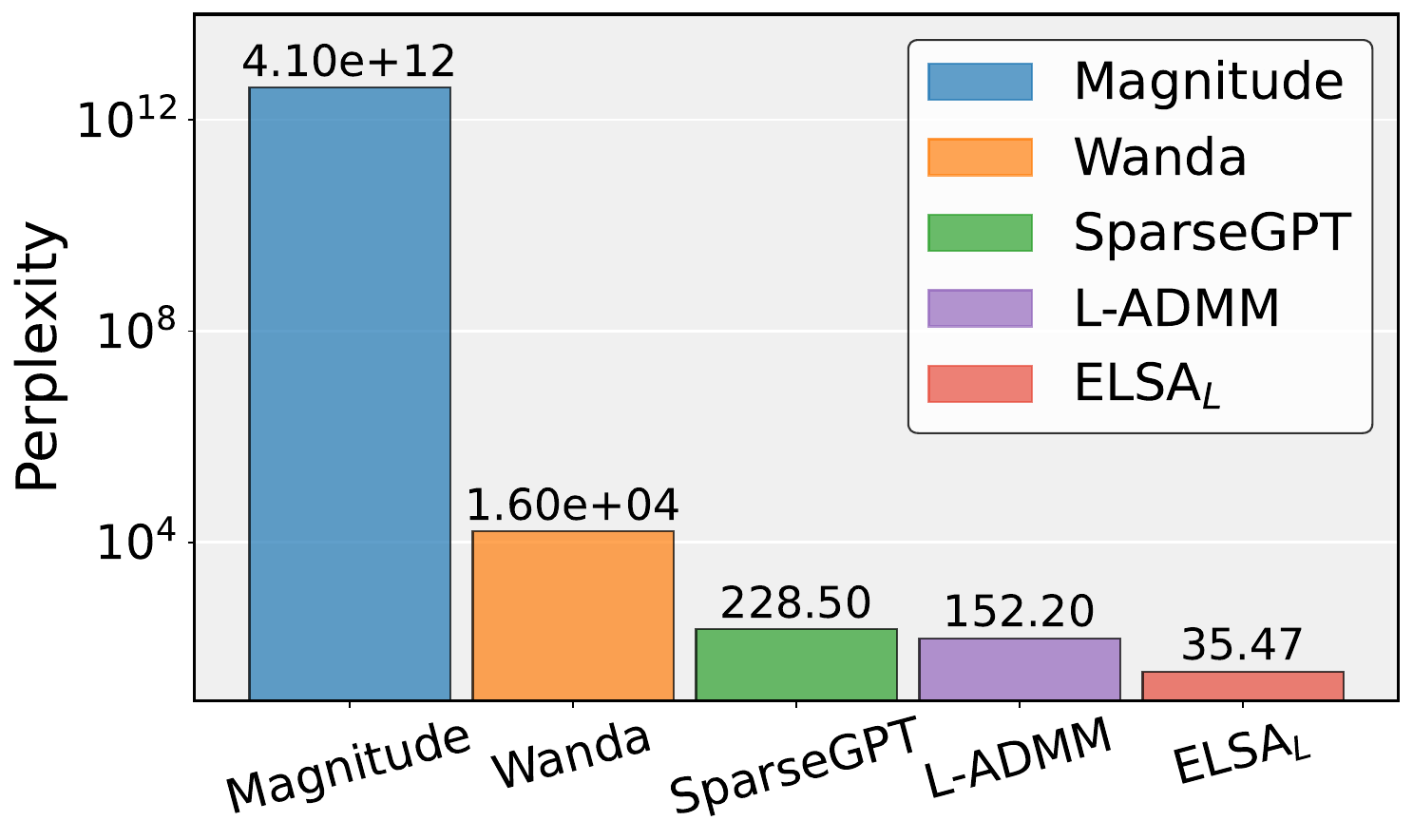} 
    \caption{
        Perplexity of Gemma-2-27B at 90\% sparsity.
    }
    \label{fig:gemma-27b}
    \vspace{-3em}
\end{wrapfigure}
We further validate the scalability of our approach by applying \oursq to 27B-scale (Gemma-2-27B).
Specifically, we additionally employ the low-precision optimizer $\texttt{adam8bit}$ \citep{8bit2022dettmers} for $x$-update \cref{eq:x_update}, with \oursq, where we use ($\texttt{bf16} ,\texttt{fp8}$) for auxiliary state $(u,z)$, respectively. 
This design reduces the memory footprint of required states (optimizer, auxiliary variables) by 55\% compared to \ours, enabling pruning at 27B scale under limited resources.
Additional implementation details can be found in \cref{Appendix:details_5.2}.

\cref{fig:gemma-27b} demonstrates that \oursq achieves the lowest perplexity among all compared methods, outperforming the strongest competing method by a factor of $4\times$,
supporting our main results at scale.

\subsection{Cost analysis}
\label{subsec:cost_analysis}
To assess \emph{effective efficiency} (i.e., quality achieved per pruning compute at fixed sparsity), we measure end-to-end pruning cost on identical NVIDIA A100-80GB GPUs and report GPU-hours together with perplexity for LLaMA-2-7B at 90\% sparsity (Table~\ref{tab:cost_analysis}).
One-shot methods (Wanda, SparseGPT) are the cheapest (0.16/0.25 hours with single GPU) but collapse at this sparsity level, yielding unusable perplexities ($\ge 10^4$).
The best-performing baseline ALPS improves perplexity but still remains far from usable, while requiring substantial wall-clock time (12.57 hours).
Overall, these layer-wise pruners are memory-light by design, yet additional pruning compute does not reliably translate into commensurate quality gains under high sparsity.

In contrast, \ours achieves substantially lower perplexity (26.97/23.14) with a moderate compute budget (7.12 GPU-hours), yielding a markedly better cost–quality point in this regime.
Moreover, even when augmented with matched-budget retraining (Wanda+LoRA/Full), one-shot+retrain baselines do not reach comparable quality, indicating that additional compute is more effectively converted into model quality by \ours at high sparsity.

\vspace{2em}

\begin{table}[t]
\centering
\caption{Compute cost vs.\ perplexity of different pruners on LLaMA-2-7B at 90\% sparsity.}
\label{tab:cost_analysis}
\setlength{\tabcolsep}{3.6pt}
\renewcommand{\arraystretch}{1.05}
\small
\begin{tabular}{lcccc}
\toprule
Method & Wall-clock (h) & \#GPUs & Wiki & C4 \\
\midrule
Wanda     & 0.159 & 1 & $2.0\times10^{4}$ & $1.0\times10^{4}$ \\
SparseGPT & 0.251 & 1 & 1430 & 864.5 \\
ALPS      & 12.57 & 1 & 248 & 180.9 \\
\midrule
Wanda+LoRA & 4.03 & 1 & 92.66 & 65.56 \\
Wanda+Full & 1.64 & 4 &  42.40 & 34.87 \\
\ours     & 1.78  & 4 & \textbf{26.97} & \textbf{23.14} \\
\bottomrule
\end{tabular}
\end{table}

\vspace{-0.5em}
\section{Discussion}
\label{sec:discussion}
\vspace{-0.5em}

In this work, we confront the problem of moderate sparsity in LLMs through a critical inspection into the current practice, revealing that the prevailing reliance on the sequential layer-wise reconstruction surrogate may have been constraining the path toward more extreme sparsities.
This led us to develop \ours and \oursq, enabling us to push the sparsity from 50--70\% up to 80--90\% while maintaining strong language modeling performance, where we also observe tangible deployment benefits such as inference acceleration and memory savings, and further showing that our approach remains effective even in more extreme regimes (e.g., 95--99\%).
Grounding on optimization principles ensures that our principle effectively solves the true LLM objective as is, while also facilitating the development of advanced techniques that are both theoretically sound and effective for sparsifying LLMs, which we believe were instrumental in attaining strong practical results.

Meanwhile, we remark on the memory demands associated with pruning LLMs.
In particular, we propose to reassess the widespread assumption that, given the limitations of commodity memory, the adoption of a layer-wise surrogate strategy is difficult to circumvent.
First of all, it is worth questioning whether the underlying assumption itself is too restrictive---after all, one would not typically attempt to prune an LLM without at least the resources required to run one.
Also, we raise doubts about whether the layer-wise strategy provides clear memory advantages.
Precisely, using the offloading technique allows one to optimize over the entire model with similar memory efficiency.
In fact, quite the opposite may be the case---they do not scale well with the size of calibration data, requiring the layer activations of the entire calibration data to be stored, while a single mini-batch usually suffices the surrogate-free principle.
This calls into question whether our perception of its efficiency could be somewhat inflated, requiring the need for a careful assessment of current practice and exploration of alternative strategies through a more balanced lens.

There are many promising directions to pursue for future work: (i) alternative efficiency strategies through advanced memory-efficient and derivative-free optimizers, (ii) system-level advancements in memory offloading, and (iii) extensions to advanced architecture such as Mixture-of-Experts \citep{mu2025comprehensive} and multi-modal large language models \citep{yin2024survey}.
To conclude, our work validates that the frontier of LLM sparsity can still be expanded by offering a concrete strategy supported by strong empirical evidence.
We hope it sets the stage for future breakthroughs and innovations in new directions that have thus far received relatively limited attention.

\section*{Acknowledgements}

This work was partly supported by 
the Institute of Information \& communications Technology Planning \& Evaluation (IITP) grant funded by the Korean government (MSIT)  
(RS-2019-II191906, Artificial Intelligence Graduate School Program (POSTECH),
RS-2022-II220959, (part2) Few-Shot learning of Causal Inference in Vision and Language for Decision Making),  
the National Research Foundation of Korea (NRF) grant funded by the Korean government (MSIT) 
( 
RS-2023-00210466, 
RS-2025-02264052). 



\bibliography{reference}
\bibliographystyle{iclr2026_conference}

\newpage

\appendix

\section{Proof of \cref{thm:gpaq_conv}}
\label{app: proof_qgpa}
Here we present the convergence proof of \oursq{}.
Formally, we prove the convergence of the following algorithm:

\begin{algorithm}[]
	\caption{\oursq{}}
	\label{alg: gpaq}
	\begin{algorithmic}[1]
	    \State {\textbf{Input}}: Constant~$\lambda>0$; initial points $x_0$, $ u_0  \in \mathbb{R}^d$
		\For{$r = 0,1,2,\ldots$}
		\State \textbf{Update $y$}: \quad  $\Pi_{\mathcal{S}} (x^t+ \lambda^{-1} u^t)$ \label{step:ADMMQYupdate}
		\State \textbf{Update $x$} by finding a point $x^{t+1}$ satisfying $\nabla f(x^{t+1})+\mathcal{Q}[ u^t+\lambda(x^{t+1}-z^{t+1})]$=0 and $\|x^{t+1}-x^{t+1}_\star\|\leq \gamma \min\;\{\|x^{t+1}-z^{t+1}\|,\|x^{t+1}-x^t\| \}$
        \label{xupdate}
		\State \textbf{Update $ u$}:\quad  $ u^{t+1}=\quant [ u^t+\lambda(x^{t+1}-z^{t+1})]$
		\EndFor
	\end{algorithmic}
\end{algorithm}

First, let us define:
\begin{align}
    e^t & = \nabla_x\mathcal{L}(x^t,z^t, u^{t-1}) \\
    & = \nabla f(x^t) +  u^{t-1} + \lambda(x^t-z^t) \\
    & =  u^{t-1} + \lambda(x^t-z^t) - \mathcal{Q}[ u^{t-1} + \lambda(x^t-z^t)]. 
\end{align}

Thus, we can express the $ u$ step in terms of $e^t$ as follows 
\begin{align}
     u^{t+1} & = \mathcal{Q}[ u^t + \lambda(x^{t+1}-z^{t+1})] \\
    & =  u^t + \lambda(x^{t+1}-z^{t+1}) - e^{t+1}
\end{align}

\begin{lemma} 
\label{lemma: strong-convexity}
Due to $(\lambda - \mu)$-strong convexity and $(\beta+\lambda)$-smoothness of $\mathcal{L}(\cdot, z^t,  u^{t-1})$, we know that
\begin{align}
    (\lambda - \mu) \|x^t-x_\star^t\|\leq \|e^t\| \leq (\lambda + \beta)\|x^t-x_\star^t\|
\end{align}
Moreover, due to strong convexity we also know that:
\begin{align}
    \langle e^t, x^t-x_\star^t\rangle \geq (\lambda - \mu) \|x^t-x_\star^t\|^2
\end{align}
\end{lemma}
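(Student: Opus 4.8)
The plan is to read both displays as standard first-order consequences of strong convexity and smoothness applied to the $x$-block of the augmented Lagrangian. Set $\phi^t(x) := \mathcal{L}(x, z^t, u^{t-1})$, so that $\nabla\phi^t(x) = \nabla f(x) + u^{t-1} + \lambda(x - z^t)$ and in particular $e^t = \nabla\phi^t(x^t)$. Since $x_\star^t$ denotes the minimizer of $\phi^t$, first-order optimality gives $\nabla\phi^t(x_\star^t) = 0$, so $e^t = \nabla\phi^t(x^t) - \nabla\phi^t(x_\star^t)$; the whole lemma thus reduces to comparing $\nabla\phi^t$ at $x^t$ against its value at the minimizer.

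First I would pin down the two curvature constants of $\phi^t$. Beyond $f$, the only $x$-dependence of $\phi^t$ is the term $\lambda(x - z^t)$ in the gradient, i.e. a quadratic contributing exactly $\lambda$ to the curvature in every direction (the dual term $u^{t-1}$ is affine and contributes nothing). By \cref{assump:weakcvx}, $f + \frac{\mu}{2}\|\cdot\|^2$ is convex, so $f$ has curvature at least $-\mu$; adding the $\lambda$-penalty makes $\phi^t$ $(\lambda - \mu)$-strongly convex, a strictly positive modulus in the regime $\lambda > \mu$ under which the algorithm is analyzed. By \cref{assump:smooth}, $\nabla f$ is $\beta$-Lipschitz, bounding the curvature of $f$ above by $\beta$, so $\phi^t$ is $(\lambda + \beta)$-smooth.

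With these two properties in hand, each display is immediate. The upper bound is the Lipschitz-gradient estimate for an $(\lambda+\beta)$-smooth function: $\|e^t\| = \|\nabla\phi^t(x^t) - \nabla\phi^t(x_\star^t)\| \leq (\lambda + \beta)\|x^t - x_\star^t\|$. The inner-product inequality is precisely the defining inequality of $(\lambda-\mu)$-strong convexity evaluated at the pair $(x^t, x_\star^t)$, using $\nabla\phi^t(x_\star^t)=0$: $\langle e^t, x^t - x_\star^t\rangle \geq (\lambda - \mu)\|x^t - x_\star^t\|^2$. The lower bound then follows by pairing this with Cauchy--Schwarz, $\langle e^t, x^t - x_\star^t\rangle \leq \|e^t\|\,\|x^t - x_\star^t\|$, and cancelling a factor of $\|x^t - x_\star^t\|$.

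There is no genuine obstacle here, since each step is a textbook convex-analysis fact; the only points deserving attention are confirming that the weak convexity of $f$ combines with the strongly convex penalty to yield a strictly positive modulus $\lambda - \mu$ (guaranteed by $\lambda > \mu$), and handling the degenerate case $x^t = x_\star^t$, where all three inequalities hold trivially.
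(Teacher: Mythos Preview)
Your proposal is correct and matches the paper's intended argument: the paper states the lemma without proof, with the justification embedded in the phrase ``Due to $(\lambda-\mu)$-strong convexity and $(\beta+\lambda)$-smoothness of $\mathcal{L}(\cdot,z^t,u^{t-1})$,'' and you have simply written out the standard first-order consequences of those two properties. Your derivation of the curvature constants from Assumptions~\ref{assump:smooth} and~\ref{assump:weakcvx}, together with the Lipschitz-gradient bound, the strong-convexity monotonicity inequality, and the Cauchy--Schwarz step, is exactly the textbook argument the paper is invoking.
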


\begin{lemma} 
\label{lemma: lower_bound_inexact}
If $\lambda\geq \beta$ and we also assume that the iterates $x^t$ stay bounded. Then there exists a non-negative number $\bar{D}$ s.t. $\|x^t-z^t\|\leq \bar{D}$. With this definition, 
\begin{align}
    \mathcal{L}(x^t, z^t,  u^t) \geq f_{\min} - {\gamma}(\lambda+\beta)\bar{D}^2
\end{align}
\end{lemma}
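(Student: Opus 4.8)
The plan is to reduce Lemma~\ref{lemma: lower_bound_inexact} to the standard augmented-Lagrangian lower bound by first extracting a clean identity between the dual variable and the gradient. Combining the stationarity enforced in the $x$-update, $\nabla f(x^t)+\mathcal{Q}[u^{t-1}+\lambda(x^t-z^t)]=0$, with the $u$-update $u^t=\mathcal{Q}[u^{t-1}+\lambda(x^t-z^t)]$, the quantization operator $\mathcal{Q}$ occurs in both and cancels, yielding the exact relation $u^t=-\nabla f(x^t)$ for every $t\ge 1$. This is precisely what the definition of $e^t$ in the excerpt encodes. The identity is the linchpin: it lets me remove the dual variable from the cross term of the Lagrangian and replace it by a gradient, which is exactly what the smoothness assumption controls.

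First I would settle the existence of $\bar{D}$. Since the $x^t$ are bounded by hypothesis and $\nabla f$ is continuous (Assumption~\ref{assump:smooth}), the identity $u^t=-\nabla f(x^t)$ forces the $u^t$ to stay bounded on the compact set containing the iterates. The projection $z^t=\Pi_{\mathcal{S}}(x^{t-1}+\lambda^{-1}u^{t-1})$ is a hard-thresholding that only zeroes coordinates, so $\|z^t\|\le\|x^{t-1}+\lambda^{-1}u^{t-1}\|$ is bounded as well; hence $\|x^t-z^t\|$ admits a uniform bound $\bar{D}$.

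For the bound itself, I would start from $\mathcal{L}(x^t,z^t,u^t)=f(x^t)+I_{\mathcal{S}}(z^t)+\langle u^t,x^t-z^t\rangle+\tfrac{\lambda}{2}\|x^t-z^t\|^2$, note that $z^t\in\mathcal{S}$ makes $I_{\mathcal{S}}(z^t)=0$, and substitute $u^t=-\nabla f(x^t)$ to get $f(x^t)-\langle\nabla f(x^t),x^t-z^t\rangle+\tfrac{\lambda}{2}\|x^t-z^t\|^2$. Applying the $\beta$-smoothness inequality $f(z^t)\le f(x^t)+\langle\nabla f(x^t),z^t-x^t\rangle+\tfrac{\beta}{2}\|x^t-z^t\|^2$ to the first two terms gives $\mathcal{L}(x^t,z^t,u^t)\ge f(z^t)+\tfrac{\lambda-\beta}{2}\|x^t-z^t\|^2$. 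Because $\lambda\ge\beta$ the quadratic residual is non-negative, and $f(z^t)\ge f_{\min}$ by Assumption~\ref{assump:lowbound} since $z^t\in\mathcal{S}$; hence in fact $\mathcal{L}(x^t,z^t,u^t)\ge f_{\min}$, and the claimed bound follows a fortiori because the slack $\gamma(\lambda+\beta)\bar{D}^2$ is non-negative.

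The main obstacle is conceptual rather than computational: recognizing that the quantization, which generically breaks the ADMM optimality conditions, cancels exactly between the $x$- and $u$-updates so that $u^t=-\nabla f(x^t)$ survives intact. If one prefers to keep the dual variable explicit and recover the stated error term directly, the alternative is to write $u^t=u^{t-1}+\lambda(x^t-z^t)-e^t$, bound the cross term by $|\langle e^t,x^t-z^t\rangle|\le\|e^t\|\,\bar{D}$, and then use Lemma~\ref{lemma: strong-convexity} with the inexactness $\|x^t-x^t_\star\|\le\gamma\|x^t-z^t\|$ to obtain $\|e^t\|\le(\lambda+\beta)\gamma\bar{D}$, which produces precisely the $\gamma(\lambda+\beta)\bar{D}^2$ correction. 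I would present the first route as the main argument and note the second as the natural source of the stated form.
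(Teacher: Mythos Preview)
Your argument is correct and structurally mirrors the paper's: both replace $u^t$ in the cross term, invoke $\beta$-smoothness to lower-bound by $f(z^t)$, and finish with Assumption~\ref{assump:lowbound}. The paper proceeds via what you call the ``alternative route'': it keeps the residual $e^t$ explicit, writing the cross term as $\langle\nabla f(x^t),z^t-x^t\rangle+\langle e^t,x^t-z^t\rangle$, bounds $|\langle e^t,x^t-z^t\rangle|\le\|e^t\|\,\bar D$, and controls $\|e^t\|$ through Lemma~\ref{lemma: strong-convexity} and the inexactness condition $\|x^t-x^t_\star\|\le\gamma\|x^t-z^t\|$; this is precisely where the $\gamma(\lambda+\beta)\bar D^{2}$ in the statement originates. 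Your primary route is tighter: the cancellation $u^t=-\nabla f(x^t)$ you spotted between the $x$- and $u$-updates eliminates the $e^t$ contribution entirely, yielding $\mathcal{L}(x^t,z^t,u^t)\ge f_{\min}$ outright for $t\ge 1$. Both arguments establish the claimed bound; yours additionally shows that the slack term is not actually needed for the algorithm as stated.
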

\begin{proof}
    Note that
    \begin{align}
      \mathcal{L}(x^t, z^t,  u^t) &=
      f(x^t) + \langle  u^t, x^t-z^t\rangle + \frac{\lambda}{2}\|x^t-z^t\|^2\\
      & = \underbrace{f(x^t) + \langle\nabla f(x^t), z^t-x^t \rangle + \frac{\lambda}{2}\|x^t-z^t\|^2}_{\geq f(z^t)}+ \langle e^t, x^t-z^t\rangle\\
      &\geq f(z^t) - \|e^t\|\|x^t-z^t\|\\
      &\geq f_{\min} - {\gamma} (\lambda+\beta)\bar{D}^2
    \end{align}
    where the last inequality is due to the assumptions and Lemma \ref{lemma: strong-convexity}.
\end{proof}
Now let us prove sufficient decrease on $\mathcal{L}$ in each iteration.
\begin{lemma}
\label{lemma: decrease_inexact}
Let the assumptions of Lemma \ref{lemma: lower_bound_inexact} be true. Also, assume that the parameters $\lambda$ and ${\gamma}$ are chosen such that
\begin{equation}
     \frac{\beta^2}{\lambda} + \frac{\beta(\lambda + \beta) \gamma}{\lambda}  + \frac{{\gamma}^2(\lambda+\beta)}{2}-\frac{(1-{\gamma})^2(\lambda - \mu)}{2} < 0.
\end{equation}
Note that $\lambda-\mu\geq 0$. Then, we have
\begin{equation}
    \lim_{r\rightarrow\infty}\|x^{t+1}-x^t\| = 0.
\end{equation}

\end{lemma}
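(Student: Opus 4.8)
The plan is to follow the standard ADMM sufficient-decrease argument, adapted to account for the quantization error $e^t$. The key quantity is $\mathcal{L}(x^t, z^t, u^t)$, and the goal is to show it decreases by an amount controlled from below by $\|x^{t+1}-x^t\|^2$ (up to negligible terms), so that the telescoping sum over $t$ together with the lower bound from Lemma~\ref{lemma: lower_bound_inexact} forces $\|x^{t+1}-x^t\|\to 0$.

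First I would decompose the one-step change in $\mathcal{L}$ across the three block updates:
\begin{equation}
\nonumber
\mathcal{L}(x^{t+1}, z^{t+1}, u^{t+1}) - \mathcal{L}(x^t, z^t, u^t)
= \underbrace{\big[\mathcal{L}(x^{t+1}, z^{t+1}, u^{t+1}) - \mathcal{L}(x^{t+1}, z^{t+1}, u^t)\big]}_{u\text{-step}}
+ \underbrace{\big[\mathcal{L}(x^{t+1}, z^{t+1}, u^t) - \mathcal{L}(x^{t+1}, z^t, u^t)\big]}_{z\text{-step}}
+ \underbrace{\big[\mathcal{L}(x^{t+1}, z^t, u^t) - \mathcal{L}(x^t, z^t, u^t)\big]}_{x\text{-step}}.
\end{equation}
For the $u$-step, using $u^{t+1} = u^t + \lambda(x^{t+1}-z^{t+1}) - e^{t+1}$ and the fact that $\mathcal{L}$ is linear in $u$, the increase equals $\langle u^{t+1}-u^t, x^{t+1}-z^{t+1}\rangle = \tfrac{1}{\lambda}\|u^{t+1}-u^t+e^{t+1}\|^2 - \tfrac{1}{\lambda}\langle u^{t+1}-u^t, e^{t+1}\rangle$; the point is to bound $\|u^{t+1}-u^t\|$ by $\|x^{t+1}-x^t\|$ and an $e$-term. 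The crucial ingredient here is to relate the dual change to the primal change: from the $x$-update optimality $\nabla f(x^{t+1}) + \mathcal{Q}[u^t+\lambda(x^{t+1}-z^{t+1})] = 0$, i.e. $\nabla f(x^{t+1}) + u^{t+1} = 0$, so $u^{t+1}-u^t = -(\nabla f(x^{t+1})-\nabla f(x^t)) + (u^t + \nabla f(x^t))$, and one iterates/telescopes this with $\beta$-smoothness to get $\|u^{t+1}-u^t\| \le \beta\|x^{t+1}-x^t\| + (\text{error terms involving }e^t, e^{t+1})$. For the $z$-step, $z^{t+1} = \Pi_{\mathcal{S}}(x^{t+1}+\lambda^{-1}u^t)$ is the minimizer of the quadratic $\tfrac{\lambda}{2}\|x^{t+1}-z+\lambda^{-1}u^t\|^2$ over $z\in\mathcal{S}$, so this term is $\le 0$ (it can be dropped, or one can keep the $-\tfrac{\lambda}{2}\|z^{t+1}-z^t\|^2$ improvement if needed). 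For the $x$-step, use $(\lambda-\mu)$-strong convexity of $\mathcal{L}(\cdot, z^t, u^t)$: its decrease is at most $-\tfrac{\lambda-\mu}{2}\|x^{t+1}-x^t\|^2 + \langle \nabla_x\mathcal{L}(x^{t+1},z^t,u^t) , x^{t+1}-x^t\rangle$ type bound, and crucially $\nabla_x\mathcal{L}(x^{t+1},z^t,u^t)$ is small — it differs from $e^{t+1}$-related quantities — which is exactly where Lemma~\ref{lemma: strong-convexity} (giving $\|e^{t+1}\| \le (\lambda+\beta)\|x^{t+1}-x_\star^{t+1}\|$) and the $x$-update accuracy condition $\|x^{t+1}-x_\star^{t+1}\| \le \gamma\min\{\|x^{t+1}-z^{t+1}\|, \|x^{t+1}-x^t\|\}$ are used to convert all error terms into a small multiple of $\|x^{t+1}-x^t\|^2$ (and $\|x^{t+1}-z^{t+1}\|^2$, which in turn links back to $\|u^{t+1}-u^t\|$).

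Assembling these, I would obtain an inequality of the form $\mathcal{L}(x^{t+1},z^{t+1},u^{t+1}) - \mathcal{L}(x^t,z^t,u^t) \le -C\,\|x^{t+1}-x^t\|^2$ with $C > 0$ precisely when the stated condition $\tfrac{\beta^2}{\lambda} + \tfrac{\beta(\lambda+\beta)\gamma}{\lambda} + \tfrac{\gamma^2(\lambda+\beta)}{2} - \tfrac{(1-\gamma)^2(\lambda-\mu)}{2} < 0$ holds; the three positive terms on the left are exactly the contributions from the $\beta$-smooth dual coupling, the cross term between smoothness and quantization error, and the quadratic quantization-error term, while the negative term is the strong-convexity gain discounted by the $(1-\gamma)$ factor from inexactness. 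Since $\mathcal{L}$ is bounded below along the iterates (Lemma~\ref{lemma: lower_bound_inexact}, using boundedness of $x^t$, hence of $\|x^t-z^t\|$), summing the decrease over $t=0,1,2,\dots$ gives $\sum_t \|x^{t+1}-x^t\|^2 < \infty$, whence $\|x^{t+1}-x^t\|\to 0$.

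The main obstacle I anticipate is the $u$-step bookkeeping: cleanly bounding $\|u^{t+1}-u^t\|$ (and the residual $\|x^{t+1}-z^{t+1}\|$) in terms of $\|x^{t+1}-x^t\|$ while carefully tracking the quantization errors $e^t, e^{t+1}$, since $e$ is not summable-by-assumption but must instead be absorbed via Lemma~\ref{lemma: strong-convexity} and the $\gamma$-accuracy of the $x$-update. Getting the constants to line up exactly with the stated threshold — rather than a looser sufficient condition — requires choosing the Young's-inequality splitting parameters carefully in the cross terms; that is the one place where I would slow down and do the algebra precisely rather than bound crudely.
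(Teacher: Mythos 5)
Your plan follows essentially the same route as the paper's proof: the augmented Lagrangian is the Lyapunov function, the per-iteration change is split across the block updates, the dual change is controlled through the exact identity $u^{t+1}=-\nabla f(x^{t+1})$ coming from the $x$-update optimality together with $\beta$-smoothness, the quantization error $e^{t+1}$ is absorbed via Lemma~\ref{lemma: strong-convexity} and the $\gamma$-accuracy of the $x$-update, and the conclusion follows by telescoping against the lower bound of Lemma~\ref{lemma: lower_bound_inexact}. Two details in your sketch need fixing to line up with Algorithm~\ref{alg: gpaq}. First, in that algorithm the $z$-update precedes the $x$-update and reads $z^{t+1}=\Pi_{\mathcal{S}}(x^t+\lambda^{-1}u^t)$, not $\Pi_{\mathcal{S}}(x^{t+1}+\lambda^{-1}u^t)$; hence the nonpositive $z$-comparison must be made at the point $x^t$, i.e.\ $\mathcal{L}(x^t,z^{t+1},u^t)\le\mathcal{L}(x^t,z^t,u^t)$, and the $x$-step must then be analyzed with $z^{t+1}$ held fixed. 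This is also what makes $x^{t+1}_\star$ --- the exact minimizer of $\mathcal{L}(\cdot,z^{t+1},u^t)$, the object in the accuracy condition --- the correct comparison point; your ordering ($x$-step at $z^t$, then $z$-step at $x^{t+1}$) invokes strong convexity of $\mathcal{L}(\cdot,z^t,u^t)$, whose minimizer is not the $x^{t+1}_\star$ of the accuracy condition, and asserts a $z$-optimality at $x^{t+1}$ that the algorithm does not provide. Second, the $u$-step identity should be $\langle u^{t+1}-u^t,\,x^{t+1}-z^{t+1}\rangle=\lambda^{-1}\bigl(\|u^{t+1}-u^t\|^2+\langle e^{t+1},u^{t+1}-u^t\rangle\bigr)$; your version carries an extra $\lambda^{-1}\|e^{t+1}\|^2$, which would shift the constants away from the stated threshold, and the residual $u^t+\nabla f(x^t)$ you anticipate in the dual difference vanishes identically (the $x$-update optimality holds exactly at every iteration), so $\|u^{t+1}-u^t\|\le\beta\|x^{t+1}-x^t\|$ needs no extra error bookkeeping. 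With these corrections your three bounds reproduce exactly the paper's coefficients $\beta^2/\lambda$, $\beta(\lambda+\beta)\gamma/\lambda$, $\gamma^2(\lambda+\beta)/2$ and $-(1-\gamma)^2(\lambda-\mu)/2$, and the telescoping step is identical.
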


\begin{proof}
Let 
\begin{equation}
\nonumber
\begin{split}
\mathcal{L}(x^{t+1},z^{t+1}, u^{t+1})-\mathcal{L}(x^t,z^t, u^t)=\underbrace{\mathcal{L}(x^{t+1},z^{t+1}, u^{t+1})-\mathcal{L}(x^{t+1},z^{t+1}, u^t)}_{(A)}\\+ \underbrace{\mathcal{L}(x^{t+1},z^{t+1}, u^t)-\mathcal{L}(x^t,z^t, u^t)}_{(B)}.
\end{split}
\end{equation}
\\
We want to show that $(A)+(B)\leq0$.

\begin{equation}
\nonumber
    (A)=\langle u^{t+1},x^{t+1}-z^{t+1}\rangle-\langle u^t,x^{t+1}-z^{t+1}\rangle=\lambda^{-1} \bigg( \left\| u^{t+1}- u^t\right\|^2+\langle e^{t+1},   u^{t+1}- u^t \rangle \bigg).
\end{equation}
Using our definitions, we have
\begin{align}
    (A) 
    &= \lambda^{-1} \bigg( \| u^{t+1}- u^t\|^2 +\langle e^{t+1},   u^{t+1}- u^t \rangle \bigg) \\
    &= \lambda^{-1} \bigg( \| \nabla f(x^{t+1})-\nabla f(x^t) \|^2 +\langle e^{t+1},  \nabla f(x^{t+1})-\nabla f(x^t) \rangle \bigg) \\
    & \leq \lambda^{-1} \bigg( \| \nabla f(x^{t+1})-\nabla f(x^t) \|^2 + \| e^{t+1} \|  \| \nabla f(x^{t+1})-\nabla f(x^t) \|  \bigg) \\
    & \leq \lambda^{-1} \bigg( \beta^2 \| x^{t+1} - x^t \|^2 + \beta \| e^{t+1} \|\| x^{t+1} - x^t \| \bigg) \\
    & \leq \lambda^{-1} \bigg( \beta^2 \| x^{t+1} - x^t \|^2 + \beta (\lambda + \beta) \| x^{t+1} - x^{t+1}_\star \|\| x^{t+1} - x^t \| \bigg) \\
    & \leq \lambda^{-1} \bigg( \beta^2 \| x^{t+1} - x^t \|^2 + \beta (\lambda + \beta) \gamma \| x^{t+1} - x^t \|^2 \bigg) \\
    & = \lambda^{-1} \beta \bigg( \beta  +  (\lambda + \beta) \gamma  \bigg) \| x^{t+1} - x^t \|^2,
\end{align}
where the last inequality is due to Lemma \ref{lemma: strong-convexity} and the way $x^t$ is chosen in Algorithm \ref{alg: gpaq}.

On the other hand:
\begin{equation}
\nonumber
    \begin{aligned}
        (B)&=\mathcal{L}(x^{t+1},z^{t+1}, u^t)-\mathcal{L}(x^t,z^t, u^t)\\
        &=\mathcal{L}(x^{t+1},z^{t+1}, u^t)-\mathcal{L}(x^t,z^{t+1}, u^t)+\underbrace{\mathcal{L}(x^t,z^{t+1}, u^t)-\mathcal{L}(x^t,z^t, u^t)}_{\leq 0 \text{ (due to update of $y$)}}\\
        &\leq \mathcal{L}(x^{t+1},z^{t+1}, u^t)-\mathcal{L}(x^t,z^{t+1}, u^t)\\
        & = \underbrace{\mathcal{L}(x^{t+1},z^{t+1}, u^t) - \mathcal{L}(x_\star^{t+1},z^{t+1}, u^t)}_{\leq\frac{\beta+\lambda}{2}\|x^{t+1}-x_\star^{t+1}\|^2 } + \underbrace{\mathcal{L}(x_\star^{t+1},z^{t+1}, u^t)- \mathcal{L}(x^t,z^{t+1}, u^t)}_{\leq -\frac{(\lambda - \mu)}{2}\|x_\star^{t+1}-x^t\|^2}\\
        &\leq \frac{\beta+\lambda}{2}\|x^{t+1}-x_\star^{t+1}\|^2 -\frac{(\lambda - \mu)}{2}\|x_\star^{t+1}-x^t\|^2 ,
    \end{aligned}
\end{equation}
Now note that $\|x^t-x_\star^{t+1}\|\geq (1-{\gamma})\|x^{t+1}-x^t\|$ and $\|x^{t+1}-x_\star^{t+1}\|\leq {\gamma} \|x^{t+1}-x^t\|$ because of the update rules of Algorithm \ref{alg: gpaq}. Plugging in these, we get
\begin{align}
    (B) \leq \bigg(\frac{{\gamma}^2 (\lambda+\beta)}{2}-\frac{(1-{\gamma})^2(\lambda - \mu)}{2}\bigg)\|x^{t+1}-x^t\|^2
\end{align}
Now combining the inequalities for $(A)$ and $(B)$, we have
\begin{align}
    &\mathcal{L}(x^{t+1},z^{t+1}, u^{t+1})-\mathcal{L}(x^t,z^t, u^t) \\
    &\leq \underbrace{\bigg( \frac{\beta^2}{\lambda} + \frac{\beta(\lambda + \beta) \gamma}{\lambda}  + \frac{{\gamma}^2(\lambda+\beta)}{2}-\frac{(1-{\gamma})^2(\lambda - \mu)}{2}\bigg)}_{\alpha}\|x^{t+1}-x^t\|^2 
\end{align}
Now for any $T$:
\begin{align}
    f_{\min} - {\gamma}(\lambda+\beta)\bar{D}^2 
    &\leq\mathcal{L}(x^{T+1}, z^{T+1},  u^{T+1})\\
    &= \mathcal{L}(x^0, z^0,  u^0)+ \sum_{t=0}^T \mathcal{L}(x^{t+1},z^{t+1}, u^{t+1})-\mathcal{L}(x^t,z^t, u^t)\\
    &\leq \alpha\sum_{t=0}^{T}\|x^{t+1}-x^t\|^2 + \mathcal{L}(x^0, z^0,  u^0).
\end{align}
Now if the parameters are chosen appropriately such that $\alpha<0$, then the right hand side of the above inequality is decreasing as $T$ increases, while the left hand side is constant. Therefore, we have $\lim_{T\rightarrow\infty}\sum_{t=0}^T\|x^{t+1}-x^t\|^2<\infty$. Thus, $\lim_{r\rightarrow\infty}\|x^{t+1}-x^t\| = 0$.
\end{proof}

\begin{theorem}
Assume that all the assumptions of Lemma \ref{lemma: decrease_inexact} is satisfied. Then, For any limit point $(\bar{x}, \bar{z}, \bar{\lambda})$ of the Algorithm \ref{alg: gpaq}, $\bar{x}$ is a $\lambda$-stationary solution of the problem.
\end{theorem}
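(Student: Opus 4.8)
The plan is to leverage the result of Lemma~\ref{lemma: decrease_inexact}, namely that $\|x^{t+1}-x^t\| \to 0$ along the iterates, together with the definitions of the error term $e^t$ and the update rules of Algorithm~\ref{alg: gpaq}, to characterize the limit point. First I would fix a limit point $(\bar x, \bar z, \bar u)$, so that there is a subsequence of iterates converging to it. Along this subsequence, I want to pass to the limit in the three update relations. The key preliminary observation is that $\|x^{t+1}-x^t\|\to 0$ forces $e^{t+1} = \nabla f(x^{t+1}) - \nabla f(x^t) \to 0$ by $\beta$-smoothness, and hence by Lemma~\ref{lemma: strong-convexity} also $\|x^{t+1}-x^{t+1}_\star\| \to 0$. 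This means the quantization error vanishes in the limit and the inexactness in the $x$-update washes out, so in the limit $\bar x$ exactly satisfies the stationarity condition $\nabla f(\bar x) + \bar u + \lambda(\bar x - \bar z) = 0$ while also $\bar u + \lambda(\bar x - \bar z) = \mathcal{Q}[\bar u + \lambda(\bar x - \bar z)]$; combining these gives $\nabla f(\bar x) = 0$ is \emph{not} quite what we want — rather, from the $u$-update $u^{t+1} = u^t + \lambda(x^{t+1}-z^{t+1}) - e^{t+1}$ and $e^{t+1}\to 0$, convergence of $u^t$ along the subsequence forces $x^{t+1}-z^{t+1}\to 0$, hence $\bar x = \bar z$.

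Next I would use the $z$-update (step~\ref{step:ADMMQYupdate}), which reads $z^{t+1} = \Pi_{\mathcal{S}}(x^t + \lambda^{-1}u^t)$. Passing to the limit along the subsequence — using that $\Pi_{\mathcal{S}}$ is the projection onto the (closed) sparsity set and that projections onto $\mathcal{S}$ have a closed graph in the sense that limits of projected points are projected points — we obtain $\bar z \in \Pi_{\mathcal{S}}(\bar x + \lambda^{-1}\bar u)$. Here one must be slightly careful: $\mathcal{S}$ is nonconvex, so $\Pi_{\mathcal{S}}$ is set-valued, but the relevant closedness property still holds because $\mathcal{S}$ is a finite union of subspaces. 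Now combining $\bar z = \bar x$ with the limiting stationarity relation $\nabla f(\bar x) + \bar u + \lambda(\bar x - \bar z) = 0$, which with $\bar x = \bar z$ collapses to $\bar u = -\nabla f(\bar x)$, and substituting into $\bar z \in \Pi_{\mathcal{S}}(\bar x + \lambda^{-1}\bar u)$ yields
\[
\bar x = \bar z \in \Pi_{\mathcal{S}}\bigl(\bar x - \lambda^{-1}\nabla f(\bar x)\bigr),
\]
which is exactly the defining condition of a $\lambda$-stationary point in Definition~\ref{def:stationary}.

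The main obstacle I anticipate is making the limiting argument for the set-valued projection rigorous: since $\mathcal{S}$ is nonconvex, $\Pi_{\mathcal{S}}$ is neither single-valued nor continuous, so one cannot simply "take limits" in $z^{t+1} = \Pi_{\mathcal{S}}(x^t+\lambda^{-1}u^t)$. The resolution is to use the outer semicontinuity (closed graph) of the projection onto $\mathcal{S}$: if $a^t \to a$ and $b^t \in \Pi_{\mathcal{S}}(a^t)$ with $b^t \to b$, then $b \in \Pi_{\mathcal{S}}(a)$. This holds because $\mathcal{S}$ is closed and $\|b^t - a^t\| = \mathrm{dist}(a^t, \mathcal{S}) \to \mathrm{dist}(a, \mathcal{S}) = \|b - a\|$ by continuity of the distance function, together with $b \in \mathcal{S}$ by closedness. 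A secondary technical point is confirming that the auxiliary sequences $z^t$ and $u^t$ genuinely converge (not merely $x^t$) along the chosen subsequence; this follows from boundedness of the iterates (assumed), $\|x^{t+1}-x^t\|\to 0$, $e^t\to 0$, and the update relations, after possibly passing to a further subsequence. Once these continuity/closedness facts are in place, the algebra above is routine.
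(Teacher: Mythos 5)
Your overall plan coincides with the paper's proof: use Lemma~\ref{lemma: decrease_inexact} to get $\|x^{t+1}-x^t\|\to 0$, deduce that the error term vanishes, identify $\bar u=-\nabla f(\bar x)$ and $\bar x=\bar z$, and pass to the limit in the projection step to land on Definition~\ref{def:stationary}. However, your key preliminary step contains a genuine error. The identity $e^{t+1}=\nabla f(x^{t+1})-\nabla f(x^t)$ is false: by definition $e^{t+1}=\nabla_x\mathcal{L}(x^{t+1},z^{t+1},u^t)=\nabla f(x^{t+1})+u^t+\lambda(x^{t+1}-z^{t+1})$, and since the $x$- and $u$-updates of Algorithm~\ref{alg: gpaq} give $u^t=-\nabla f(x^t)$, in fact $e^{t+1}=\nabla f(x^{t+1})-\nabla f(x^t)+\lambda(x^{t+1}-z^{t+1})$; the term $\lambda(x^{t+1}-z^{t+1})$ is missing from your expression. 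This is not cosmetic: you cannot assume $x^{t+1}-z^{t+1}\to 0$ at that stage, because you later derive exactly that fact \emph{from} $e^{t+1}\to 0$, so your route to $e^{t+1}\to 0$ via $\beta$-smoothness is circular. You also apply Lemma~\ref{lemma: strong-convexity} in the reverse direction (from $e^{t+1}\to 0$ to $\|x^{t+1}-x^{t+1}_\star\|\to 0$). The correct argument—the one the paper uses—goes the other way: the inexactness criterion in step~\ref{xupdate} gives $\|x^{t+1}-x^{t+1}_\star\|\le\gamma\|x^{t+1}-x^t\|\to 0$, and then the upper bound $\|e^{t+1}\|\le(\lambda+\beta)\|x^{t+1}-x^{t+1}_\star\|$ of Lemma~\ref{lemma: strong-convexity} yields $e^{t+1}\to 0$; $\beta$-smoothness of $f$ is not what drives this step.

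Once that is repaired, the remainder of your argument is sound and essentially reproduces the paper's: $u^{t+1}=-\nabla f(x^{t+1})$ together with $x^{r_t},x^{r_t+1}\to\bar x$ gives $u^{r_t},u^{r_t+1}\to-\nabla f(\bar x)=\bar u$, the $u$-update then forces $\lambda(x^{r_t+1}-z^{r_t+1})\to 0$, and taking limits in $z^{t+1}\in\Pi_{\mathcal S}(x^t+\lambda^{-1}u^t)$ gives $\bar x\in\arg\min_{v\in\mathcal S}\|v-(\bar x-\lambda^{-1}\nabla f(\bar x))\|$. Two remarks: when you say ``convergence of $u^t$ along the subsequence forces $x^{t+1}-z^{t+1}\to 0$,'' you need $u^{r_t+1}\to\bar u$ as well, which follows from $u^{r_t+1}=-\nabla f(x^{r_t+1})$ and continuity of $\nabla f$, not from convergence of $u^{r_t}$ alone; and your outer-semicontinuity argument for the set-valued projection (closedness of $\mathcal S$ plus continuity of the distance function) is actually more careful than the paper's own limit passage, which treats $\mathcal S$ as a finite set of points $a_i$, whereas the sparsity set is a finite union of subspaces—your version covers this correctly.
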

\begin{proof}

Consider a sub-sequence $( x^{r_t}, z^{r_t}, u^{r_t})$, for $t=0,\cdots$ which converges to $(\bar{x}, \bar{z},\bar{u})$. First of all due to Lemma \ref{lemma: decrease_inexact}, we know that $\lim_{t\rightarrow \infty}\|x^{r_t+1}- x^{r_t}\|=0$ and $\lim_{t\rightarrow\infty}\|x^{r_t-1}- x^{r_t}\|=0$. Thus, 
\begin{equation}
    \lim_{t\rightarrow \infty} x^{r_t+1} = \bar{x} ~~\&~~\lim_{t\rightarrow \infty} x^{r_t-1} = \bar{x} 
\end{equation}
Moreover, due to the updates of the algorithm
\begin{equation}
    \lim_{t\rightarrow\infty} \|x^{r_t+1}-x_\star^{r_t+1}\| \leq \lim_{t\rightarrow\infty} {\gamma} \|x^{r_t+1}-x^{r_t}\| = 0 ~~\&~~\lim_{t\rightarrow\infty} \|x^{r_t}-x_\star^{r_t}\| \leq \lim_{t\rightarrow\infty} {\gamma} \|x^{r_t}-x^{r_t-1}\| = 0
\end{equation}
Thus, $\lim_{t\rightarrow\infty} e^{r_t} = \lim_{t\rightarrow\infty} e^{r_t+1} = 0$, which means
\begin{align}
    \bar{u} = \lim_{t\rightarrow \infty}  u^{r_t} = - \lim_{t\rightarrow \infty} (\nabla f( x^{r_t})-e^{r_t}) = -\nabla f(\bar{x})\\
    \lim_{t\rightarrow \infty}  u^{r_t+1} =  -\lim_{t\rightarrow \infty} (\nabla f(x^{r_t+1})-e^{r_{t+1}}) = -\nabla f(\bar{x}) 
\end{align}
Thus, $\lim_{t\rightarrow \infty}  u^{r_t+1} = \bar{u}$. 

Also, as $\mathcal{S}$ is finite, there exists a large enough T, such that $ z^{r_t} = \bar{y}$ for $t\geq T$. Again due to the fact that $\mathcal{S}$ is finite, we can re-fine the sub-sequence such that $z^{r_t+1} = \hat{y}$. Thus, without loss of generality assume that these two conditions hold, i.e. $ z^{r_t} = \bar{y}$ and $z^{r_t+1} = \hat{y}$ for all $t$ for an appropriately refined sub-sequence. This means that
\begin{equation}
\label{eq: y_hat}
    \hat{y} \in \arg\min_{x}\|x-( x^{r_t}+\lambda^{-1} u^{r_t})\|
\end{equation}
Moreover, $ u^{r_t+1} =  u^{r_t} + \lambda(x^{r_t+1}-\hat{y})$. Taking the $\lim_{t\rightarrow\infty}$ from both sides, we get
\begin{equation}
    \hat{y} = \bar{x}.
\end{equation}
Combining the above with \Eqref{eq: y_hat} we can easily see that
\begin{equation}
    \|\bar{x}-( x^{r_t}+\lambda^{-1} u^{r_t})\| \leq \|a_i-( x^{r_t}+\lambda^{-1} u^{r_t})\|,~i=0,\cdots,N
\end{equation}
Taking the limits $\lim_{t\rightarrow\infty}$ from both hand sides of the inequality for all the points $a_i$ we have
\begin{equation}
    \|\bar{x}-(\bar{x}+\lambda^{-1}\bar{u})\| \leq \|a_i-(\bar{x}+\lambda^{-1}\bar{u})\|,~i=0,\cdots,N.
\end{equation}
Thus, 
\begin{equation}
    \bar{x}\in \arg\min_{x\in\mathcal{S}}\|x-(\bar{x}-\lambda^{-1}\nabla f(\bar{x}))\|,
\end{equation}
where we used the fact that $\bar{u} = -\nabla f(\bar{x})$.
\end{proof}

\begin{table}[h]
\centering
\caption{Global hyperparameters of \ours shared across all models.}
\label{tab:global-hyperparams}
\resizebox{0.3\linewidth}{!}{
\begin{tabular}{lc}
\toprule
\textbf{Hyperparameter} & \textbf{Value} \\
\midrule
LR schedule          & Linear decay \\
Interval $k$           & 32 \\
Adam $(\beta_1,\beta_2)$             & (0.9, 0.999) \\
Total \# datapoints  & 32768 \\
\bottomrule
\end{tabular}}
\end{table}

\begin{table}[t!]
    \centering
    \caption{Learning rate ($\eta$), penalty ($\lambda$), and penalty schedule configuration across models at different sparsity levels.}
    \label{tab:hyperparams}
    \scriptsize
    \setlength{\tabcolsep}{4pt}
    \renewcommand{\arraystretch}{1.1}
    \begin{tabular}{ccc|cccccc}
        \toprule
         & \textbf{Sparsity} & \textbf{$\lambda$ sched.} &
        OPT-125M & OPT-1.3B & Gemma-2-2B & LLaMA-3.2-3B & LLaMA-2-7B & LLaMA-2-13B \\
        \midrule
        \multirow{5}{*}{$\eta$}
        & 50\% & --
            & \cellcolor{gray!28} 1e-4
            & \cellcolor{gray!28}
            & \cellcolor{gray!5}
            & \cellcolor{gray!8} 5e-5
            & \cellcolor{gray!8} 5e-5
            & \cellcolor{gray!8} \\
        & 60\% & --
            & \cellcolor{gray!38} 2e-4
            & \cellcolor{gray!28} \multirow{-2}{*}{1e-4}
            & \cellcolor{gray!5}
            & \cellcolor{gray!28} 1e-4
            & \cellcolor{gray!28} 1e-4
            & \cellcolor{gray!8} \\
        & 70\% & --
            & \cellcolor{gray!28} 1e-4
            & \cellcolor{gray!8} 5e-5
            & \cellcolor{gray!5}
            & \cellcolor{gray!8} 5e-5
            & \cellcolor{gray!8}
            & \cellcolor{gray!8} \\
        & 80\% & --
            & \cellcolor{gray!38}
            & \cellcolor{gray!28}
            & \cellcolor{gray!5} \multirow{-4}{*}{2e-5}
            & \cellcolor{gray!28}
            & \cellcolor{gray!8} \multirow{-2}{*}{5e-5}
            & \cellcolor{gray!8} \\
        & 90\% & --
            & \cellcolor{gray!38} \multirow{-2}{*}{2e-4}
            & \cellcolor{gray!28} \multirow{-2}{*}{1e-4}
            & \cellcolor{gray!8} 5e-5
            & \cellcolor{gray!28} \multirow{-2}{*}{1e-4}
            & \cellcolor{gray!28} 1e-4
            & \cellcolor{gray!8} \multirow{-5}{*}{5e-5} \\
        \midrule
        \multirow{5}{*}{$\lambda$}
        & 50\% & constant
            & \cellcolor{gray!8} 1e-4
            & \cellcolor{gray!18}
            & \cellcolor{gray!50} 2e-1
            & \cellcolor{gray!18} 1e-3
            & \cellcolor{gray!18} 1e-3
            & \cellcolor{gray!22} 2e-3 \\
        & 60\% & constant
            & \cellcolor{gray!18} 1e-3
            & \cellcolor{gray!18} \multirow{-2}{*}{1e-3}
            & \cellcolor{gray!30}
            & \cellcolor{gray!22} 2e-3
            & \cellcolor{gray!30} 5e-3
            & \cellcolor{gray!18} 1e-3  \\
        & 70\% & cosine
            & \cellcolor{gray!22} 2e-3
            & \cellcolor{gray!30} 5e-3
            & \cellcolor{gray!34} 1e-2
            & \cellcolor{gray!30} 5e-3
            & \cellcolor{gray!38}
            & \cellcolor{gray!38} 2e-2 \\
        & 80\% & cosine
            & \cellcolor{gray!18}
            & \cellcolor{gray!18}
            & \cellcolor{gray!30} 5e-3
            & \cellcolor{gray!18}
            & \cellcolor{gray!38} \multirow{-2}{*}{2e-2}
            & \cellcolor{gray!45} 5e-2 \\
        & 90\% & cosine
            & \cellcolor{gray!18} \multirow{-2}{*}{1e-3}
            & \cellcolor{gray!18} \multirow{-2}{*}{1e-3}
            & \cellcolor{gray!12} 5e-4
            & \cellcolor{gray!18} \multirow{-2}{*}{1e-3}
            & \cellcolor{gray!18} 1e-3
            & \cellcolor{gray!22} 2e-3 \\
        \bottomrule
    \end{tabular}
\end{table}

\begin{table}[t!]
    \centering
    \caption{Batch size (BS) and number of steps used at different sparsity levels.}
    \label{tab:bs_steps_rules}
    \scriptsize
    \setlength{\tabcolsep}{6pt}
    \renewcommand{\arraystretch}{1.15}
    \begin{tabular}{c|cc|cc}
        \toprule
        \textbf{Model(s)} &
        \multicolumn{2}{c|}{\textbf{50--60\%}} &
        \multicolumn{2}{c}{\textbf{70--90\%}} \\
        & \textbf{BS} & \textbf{Steps} & \textbf{BS} & \textbf{Steps} \\
        \midrule
        OPT-125M &
        16 & 2048 &
        \multicolumn{2}{c}{\multirow{2}{*}{8 \quad  \quad 4096}} \\
        OPT-1.3B, Gemma-2-2B, LLaMA-3.2-3B, LLaMA-2-7B, LLaMA-2-13B &
        32 & 1024 &
        \multicolumn{2}{c}{} \\
        \bottomrule
    \end{tabular}
\end{table}

\section{Experimental details}
\label{Appendix: Experimental details}
\subsection{Implementation and reproduction details}
\label{Appendix: Implementation and reproduction details}
Our implementation is based on PyTorch \citep{paszke2019pytorch}, 
using the HuggingFace \texttt{transformers} and \texttt{datasets} libraries for model and data loading. 
\ours is implemented over HuggingFace \texttt{Trainer}, supporting distributed training via PyTorch FSDP-2 \citep{zhao2023pytorch} with HuggingFace \texttt{Accelerate}.

All experimental results in this work are obtained with unified codebase, 
while baseline methods are reproduced using their original implementations whenever available.  
The environment configuration (dependencies, versions, and training scripts)
can be found at \texttt{https://github.com/log-postech/elsa}.

Experiments are conducted on NVIDIA A100/H200 GPUs, 
with the number of GPUs scaled to model size: 
2$\times$GPUs for 1.3B–3B models, 4$\times$A100 GPUs for 7B models, 
and 4$\times$H200 GPUs for 13B and 27B models.  

\subsection{Details for \cref{subsec:main_results}}
\label{Appendix:details_5.1}
\paragraph{Calibration/Training data.}
To obtain baseline results (Wanda, SparseGPT, ALPS, L-ADMM, SAFE, SparseLLM), 
we follow the convention of \citet{frantar2023sparsegpt}, sampling 128 calibration sequences 
from the C4 dataset with sequence length 2048.  
For \ours, we adopt the same strategy, but use larger calibration sets 
to account for the iterative nature of our optimization.  

\paragraph{Training details.}
We train \ours with 32{,}768 data points, where each data point has sequence length of 2048. Batch size and number of steps differ across model, which is presented at table \cref{tab:bs_steps_rules}. We use Adam as the base optimizer.  
The penalty parameter is kept constant for moderate sparsity levels (50-60\%), while we use cosine schedule, which gradually increases the penalty parameter from $0$ at the start to $\lambda$ at the end of training.
All model parameters and optimizer states uses full precision for training (except for memory-efficient setting and ablations), and automatic mixed precision with bf16 precision is used for efficient training.
A full list of hyperparameter configurations is provided in \cref{tab:global-hyperparams,tab:hyperparams}.  

\paragraph{Evaluation.}
Perplexity is measured on the held-out (validation) C4 \citep{raffel2020exploring} and Wikitext2 \citep{merity2017pointer} datasets.  
Zero-shot performance is evaluated with \texttt{lm-eval-harness} across seven standard tasks: ARC-Easy/Challenge (ARC-E/C) \citep{clark2018think}, BoolQ \citep{clark2019boolq}, HellaSwag \citep{zellers2019hellaswag}, OpenBookQA (OBQA) \citep{mihaylov2018can}, RTE \citep{zeng2018mlprune}, and Winogrande \citep{sakaguchi2021winogrande}, 
and we report the average accuracy as in \cref{subsec:main_results}.

\subsection{Details for \cref{subsec:extreme_sparsity}}
\label{Appendix:5.2}
For the baseline (Wanda+LoRA / Full), we first prune the pretrained model with Wanda, and then retrain the remaining parameters using either LoRA \citep{hu2022lora} or full fine-tuning (Full).
For a fair comparison, we use the same number data points with same sequence length as \ours, with hyperparameters tuned separately for each method.

\subsection{Details for \cref{subsec:scaling_results}}
\label{Appendix:details_5.2}
We ran \oursq on Gemma-2-27B using 4$\times$H200 GPUs.  
Fp8 representations for ADMM states $(u,z)$ were implemented based on the \texttt{torchao} framework\citep{torchao}, where we further extended the implementation to fully support \texttt{DTensor}, as required by the FSDP-2 framework for distributed training.
For this setting, we used a learning rate of $\eta=2\times10^{-5}$ and penalty parameter $\lambda=0.002$, using cosine penalty scheduling.
\subsection{Details for \cref{subsec:other_sparsity}}
For N:M semi-structured sparsity, we use the same hyperparameter configuration as for 50\% unstructured sparsity.  

For non-uniform sparsity comparisons, we evaluate \ours on LLaMA-3-8B using the hyperparameters of LLaMA-2-7B at 70\% sparsity, 
while the results of SparseGPT, OWL, and EvoPress are taken directly from \citet{sieberling2024evopress}.  
For \ours (EvoPress), we adopt the non-uniform sparsity configurations provided in the official EvoPress repository, 
and initialize \ours with these sparsity budgets while keeping the same training hyperparameters.

\subsection{Details for \cref{subsec:ablation}}
\label{Appendix:5.4}
For objective ablation, we used the OPT-125M model at 90\% sparsity, fixing the total number of optimization steps to 4,096 and varying the data count from 256 up to 32,684, using the same hyperparameter configurations as in \cref{tab:hyperparams}.

\section{Additional results}
\label{Appendix:Additional results}
Here we provide additional results for different sparsity patterns (\Cref{subsec:other_sparsity}), ablation studies on objective function and projection step (\Cref{subsec:ablation}), and a numerical results used to make visual plots in the main text followed by additional result reporting LLaMA-2-13B zero-shot task accuracy.
\newpage 

\begin{wraptable}{r}{0.35\textwidth} 
    \centering
    \vspace{-1.3em}
    \caption{Perplexity of LLaMA-3-8B at 70\% sparsity. 
    \ours outperforms prior allocation methods.}
    \label{tab:llama8b_evopress70}
    \resizebox{\linewidth}{!}{%
    \scriptsize
    \begin{tabular}{lcc}
        \toprule
        \textbf{Method} & \texttt{Wiki}$(\downarrow)$ & \texttt{C4}$(\downarrow)$ \\
        \midrule
        SparseGPT & 85.84 & 98.35  \\
        OWL & 48.07 & 52.32  \\
        EvoPress & 28.76 & 33.72 \\
        \rowcolor{gray!10}
        \ours (EvoPress) & 26.11 & 29.33\\
        \rowcolor{gray!10}
        \ours &\textbf{24.97} & \textbf{29.09} \\
        \bottomrule
    \end{tabular}
    }
\end{wraptable}
\subsection{Other sparsity patterns}
\label{subsec:other_sparsity}
In this section, we analyze whether \ours can adapt to other sparsity patterns including (i) N:M semi-structured sparsity and (ii) non-uniform sparsity over different layers.

\begin{table}[t!]
    \centering
    \scriptsize
    \setlength{\tabcolsep}{4pt}
    \vspace{1em}
    \caption{
        Perplexity and zero-shot prediction accuracy of LLaMA-2-7B under N:M semi-structured sparsity.
        \ours compares competitively to other methods, demonstrating its adaptivity.
        Note that 2:4 and 4:8 patterns are only 50\% sparsity levels.
    }
    \label{tab:llama7b-semi}
    \resizebox{\linewidth}{!}{%
    \begin{tabular}{clcc|*{8}{>{\centering\arraybackslash}p{0.9cm}}}
        \toprule
        & & \multicolumn{2}{c}{\textbf{Perplexity} ($\downarrow$)} & \multicolumn{8}{c}{\textbf{Tasks} ($\uparrow$)} \\
        \cmidrule(lr){3-4} \cmidrule(lr){5-12}
        \textbf{Sparsity} & \textbf{Method} & \texttt{Wiki} & \texttt{C4} & ARC-C & ARC-E & BoolQ & HellaSwag & OBQA & RTE & Winogrande & Avg. \\
        \midrule
        0\% & Dense & 5.47 & 7.26 & 43.35& 76.26& 77.68& 57.14& 31.40& 62.82& 69.06 & 59.67  \\
        \midrule
        \multirow{8}{*}{2:4}
            & Magnitude & 37.76 & 74.66  & 30.12 & 61.87 & 59.85 & 45.45 & 21.80 & 52.35 & 61.01 & 47.49 \\
            & Wanda     & 12.13 & 15.63  & 30.46 & 61.83 & 68.26 & 41.28 & 24.20 & 53.07 & 62.51 & 48.80 \\
            & SparseGPT & 10.87 & 13.61 & 30.97 & 64.06 & 67.61 & 43.47 & 24.20 & \underline{56.32} & \underline{66.38} & 50.43
  \\
            & L-ADMM    & 10.19 & 12.51 & \underline{32.85} & \underline{66.04} & 68.81 & 45.05 & \underline{25.40} & \underline{56.32} & \underline{66.38} & \underline{51.55} \\
            & ALPS      &\underline{9.945}& \textbf{12.09} & \textbf{34.47} & \textbf{68.86} & \textbf{73.79} & \textbf{49.40} & \textbf{27.60} & 55.60 & \textbf{67.25} & \textbf{53.85} \\
            & SAFE & \textbf{9.914}& \underline{12.53} & 30.46 & 63.43 & 66.42 & 44.66 & 21.60 & 53.07 & 61.80 & 48.78 \\
            & SparseLLM & 11.29 & 13.95 & 30.55 & 61.91 & \underline{71.10} & 43.62 & 24.40 & \textbf{57.40} & 65.82 & 50.69 \\
            \rowcolor{gray!10}\cellcolor{white}
            & \ours     & 10.15 & 12.34 & 31.49& 61.24 &	66.36&	\underline{47.87} &	23.60&	52.71&	63.85 & 49.59  \\
        \midrule
        \multirow{7}{*}{4:8}
            & Magnitude & 15.91 & 31.60 & \textbf{36.01} & 64.81 & 63.09 & \textbf{50.05} & 26.00 & 52.35 & 62.19 & 50.64 \\
            & Wanda     & 8.603 & 11.33  & 34.47 & 67.05 & \underline{72.87} & 46.98 & 26.80 & 54.15 & 66.93 & 52.75 \\
            & SparseGPT & 8.508 & 10.81  & 34.81 & \textbf{68.56} & 71.77 & 48.26 & \underline{27.80} & \underline{56.68} & \textbf{68.11} & \underline{53.71}\\
            & L-ADMM    & 8.12 & \underline{10.37} & \underline{35.58} & 68.18 & 72.48 & \underline{49.45} & \textbf{28.80} & \textbf{58.12} & \underline{67.17} & \textbf{54.25} \\
            & ALPS      & \underline{8.103}& \textbf{10.29} & 33.28 & 65.19 & 68.75 & 45.96 & 26.20 & 55.96 & 65.98 & 51.62 \\
            & SAFE      & \textbf{8.043} & 10.47 & 31.57 & 66.84 & 68.04 & 48.55 & 23.40 & 53.07 & 65.04 & 50.93 \\
            & SparseLLM & 8.679 & 11.04 & 34.90 & \underline{68.35} & \textbf{75.14} & 48.28 & 26.20 & \underline{56.68} & 66.46 & \underline{53.71} \\
            \rowcolor{gray!10}\cellcolor{white}
            & \ours     & 9.20 & 11.47 & 32.25&	64.69&	69.42&	49.90&	27.40&	53.07&	63.22&	51.42 \\
        \bottomrule
    \end{tabular}}
\end{table}

We first evaluate \ours for its adaptivity to N:M semi-structured sparsity, a setting designed for some current hardwares to accelerate computations \citep{sunsimple,fang2024maskllm}.
The results of both perplexity and zero-shot prediction accuracy are reported in \cref{tab:llama7b-semi}.
\ours is roughly on par with existing methods, and yet, it is noteworthy that these 2:4 and 4:8 sparsity patterns only ensure 50\% sparsity.
More importantly, these results indicate that \ours can easily adapt to arbitrary constraints of moderate sparsity levels without much trouble.

We also compare \ours with non-uniform sparsity allocation based pruning methods.
Specifically, we compare to OWL \citep{yin2024outlier} that allocates sparsity based on outlier distributions and to EvoPress \citep{sieberling2024evopress} that uses an evolutionary search strategy to determine the non-uniform sparsity levels over different layers.
We further set up a method that overrides \ours with the mask found by the evolutionary strategy of EvoPress.
Note that the sparsity level is set to be 70\%;
it is simply because these methods only works or reports up to this level.
The results are presented in \cref{tab:llama8b_evopress70}.
One can see that \ours substantially outperforms OWL and shows an improvement over EvoPress as well:
to elaborate, for instance, it achieves the C4 perplexity of $29.09$, compared to $33.72$ for EvoPress and $52.32$ for OWL.
Notably, adopting the non-uniform mask found by EvoPress within \ours yields some gains over the EvoPress itself, but it still falls short of the uniform allocation in \ours, demonstrating the strength of our surrogate-free global formulation.

\newpage 

\subsection{Ablations}
\label{subsec:ablation}

    

In this section, we present two ablation analyses on (i) the choice of objective comparing the next token prediction (NTP) against the reconstruction error minimization (REM), and (ii) the projection step contrasting our objective-aware variant with the standard projection method.

\begin{wrapfigure}{r}{0.3\textwidth}
    \vspace{-1em}
    \centering
    \includegraphics[width=0.8\linewidth]{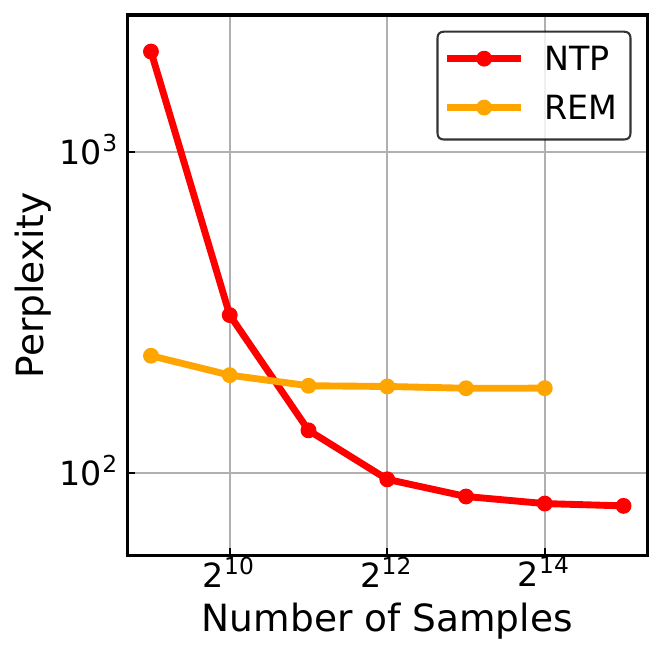}
    \caption{
        Effect of NTP on data efficiency and perplexity.
    }
    \label{fig:ppl_90_single_row}
\end{wrapfigure}

Specifically, we first set up a experiment where we measure how effectively our surrogate-free approach with NTP make use of data to preserve the original model performance while vayring the number of data samples.
We compare that to the existing REM approach.
The results are plotted in \cref{fig:ppl_90_single_row}.
While REM tend to perform better than NTP at low data regime, but it soon starts to saturate as data counts increases producing diminishing returns.
This is in stark constrat to NTP by which pruning performance keeps on improving quite drastically with more data.
Notably, REM requires memory to store dense model predictions, which can grow prohibitively large as with large data.
By contrast, NTP naturally benefits from additional data and continues to improve, enabling scalable LLM sparsity.
This in part reveals the inherent limitation of surrogate objectives.

\begin{wraptable}{r}{0.3\textwidth}
    \vspace{-1em}
    \centering
    \scriptsize
    \caption{Effectiveness of geometric projection (\checkmark).}
    \label{tab:ablation_momentum}
    \resizebox{\linewidth}{!}{%
        \begin{tabular}{c cc}
            \toprule
            \textbf{Sparsity} & \xmark & \checkmark \\
            \midrule
            70\% & 29.44 & \textbf{28.24} \\
            80\% & 40.06 & \textbf{37.50} \\
            90\% & 65.41 & \textbf{48.69} \\
            \bottomrule
        \end{tabular}
    }
\end{wraptable}

We also evaluate the effectiveness of the objective-aware projection on high-sparsity regimes.
Specifically, we measure the perplexity of LLaMA-3.2-3B model pruned for 70-90\% sparsity levels by turning on and off of the projection and report the results in \cref{tab:ablation_momentum}.
The benefit of objective-aware projection grows with sparsity: perplexity gap increases from 1.20 at 70\% sparsity to 2.56 at 80\%, and widens further at 90\%.  
This demonstrates that incorporating objective-aware importance into the projection step can be beneficial particularly in high sparsity regimes. 

\subsection{Numerical results}
Here we provide numerical results used to produce visual figures on the main text, followed by additional results reporting LLaMA-2-7B/13B zero-shot task accuracy.
\begin{table}[t]
    \centering
    \caption{Perplexity ($\downarrow$) of various models pruned with different methods across sparsity levels. Dense performance is shown under each model name (\texttt{Wiki} / \texttt{C4}). 
    Results for SparseLLM on Gemma-2-2B and LLaMA-2-13B are omitted due to implementation limitations (e.g., architectural incompatibility, out-of-memory errors). We could not obtain results of SparseLLM in Gemma-2-2b, Llama-3.2-3B, Llama-2-13B.} 
    \label{tab:main-table}
    \resizebox{\linewidth}{!}{%
    \begin{tabular}{cl *{10}{c}}
        \toprule
            &
            & \multicolumn{2}{c}{50\%} & \multicolumn{2}{c}{60\%}
            & \multicolumn{2}{c}{70\%} & \multicolumn{2}{c}{80\%} & \multicolumn{2}{c}{90\%} \\
        \cmidrule(lr){3-4}\cmidrule(lr){5-6}\cmidrule(lr){7-8}\cmidrule(lr){9-10}\cmidrule(lr){11-12}
         \textbf{Model} & \textbf{Method} & \texttt{Wiki} & \texttt{C4} & \texttt{Wiki} & \texttt{C4}
           & \texttt{Wiki} & \texttt{C4} & \texttt{Wiki} & \texttt{C4} & \texttt{Wiki} & \texttt{C4} \\
        \midrule
        \multirow{8}{*}{\shortstack{OPT-125M \\ \scriptsize (Dense: 27.65 / 26.56)}}
            & Magnitude & 193.4 & 141.0 & 920.0 & 598.2 & 3806 & 2263 & 4890 & 3213 & 6613 & 4475 \\
            & Wanda     & 38.93 & 34.91 & 77.85 & 63.33 & 351.8 & 248.9 & 1912 & 1066 & 4940 & 3126 \\
            & SparseGPT & 37.02 & 33.51 & 60.90 & 49.83 & 239.2 & 156.3 & 2072 & 1050 & 6131 & 2443 \\
            & L-ADMM    & \underline{33.02} & 31.21 & 45.04 & 38.49 & 100.5 & 74.61 & 580.8 & 315.8 & 3427 & 1350  \\
            & ALPS      & \textbf{32.70} & 30.91 & \underline{43.07} & \underline{36.94} & \underline{90.85} & \underline{66.28} & \underline{484.8} & \underline{267.7} & \underline{2524} & \underline{1094} \\
            & SAFE      & 33.88 & \underline{30.54} & 47.21 & 37.46 & 120.1 & 75.2 & 1254  & 726.8 & 5382 & 2331 \\
            & SparseLLM & 37.11 & 33.19 & 57.47 & 46.64 & 199.2 & 131.7 & 1576 & 752.2 & 4730 & 1825 \\
            \rowcolor{gray!10}\cellcolor{white}
            & \ours     & 37.68 & \textbf{30.10} & \textbf{41.5} & \textbf{33.22} & \textbf{49.57} & \textbf{39.86} & \textbf{65.30} & \textbf{47.74} & \textbf{95.33} & \textbf{62.28} \\
        \midrule
        \multirow{8}{*}{\shortstack{OPT-1.3B \\ \scriptsize (Dense: 14.62 / 16.07)}}
            & Magnitude & 1712 & 403.3 & 9392 & 5066 & 9442 & 6498 & 1.6e4 & 1.1e4 & 2.9e4 & 1.8e4 \\
            & Wanda     & 18.42 & 20.62 & 26.82 & 28.77 & 105.7 & 94.98 & 2504 & 1181 & 1.3e4 & 8447 \\
            & SparseGPT & 17.45 & 19.25 & 24.02 & 23.30 & 50.52 & 46.11 & 947.9 & 406.7 & 6472 & 2843 \\
            & L-ADMM      & 26.62 & 26.26 & 32.35 & 30.28 & 61.10 & 49.52 & 595.9 & 289.5 & 5659 & 2298 \\
            & ALPS      & \underline{16.78} & 18.59 & \underline{20.58} & 21.52 & 35.77 & 34.09 & \underline{285.7} & \underline{158.4} & \underline{4590} & \underline{1844} \\
            & SAFE      & \textbf{16.38} & \textbf{17.75} & \textbf{19.63} & \textbf{19.93} & \underline{31.17} & \underline{27.52} & 387.1 & 222.3 & 1.3e4 & 7544 \\
            & SparseLLM & 17.73 & 19.40 & 23.23 & 24.03 & 56.36 & 47.96 & 861.7 & 372.0 & 5535 & 2217 \\
            \rowcolor{gray!10}\cellcolor{white}
            & \ours     & 18.99 & \underline{18.45} & 22.11 & \underline{20.2} & \textbf{27.13} & \textbf{24.43} & \textbf{36.89} & \textbf{31.51} & \textbf{61.52} & \textbf{45.39}  \\
        \midrule
        \multirow{8}{*}{\shortstack{Gemma-2-2B \\ \scriptsize (Dense: 8.71/ 13.16 )}}
            & Magnitude & 51.66 & 57.68 & 2178 & 2064 & 4.4e7 & 3.5e6 & 2.5e9 & 2.4e8 & 5.0e9 & 2.3e9 \\
            & Wanda     & 12.07 & 17.49 & 21.39 & 32.40 & 117.5 & 152.0 & 994.6 & 855.6 & 1.1e4 & 5524  \\
            & SparseGPT & 11.58 & 16.67 & 16.53 & 23.44 & 34.73 & 47.43 & 147.7 & 160.5 & 983.1 & 776.5 \\
            & L-ADMM    & \underline{11.02} & \underline{15.84} & \underline{14.65} & 20.84 & 26.91 & 38.32 & 86.64 & 110.8 & 308.2 & 300.3 \\
            & ALPS      & \textbf{10.93} & \textbf{15.77} & \textbf{14.42} & \underline{20.32} & \underline{24.96} & 35.08 & 73.50 & 94.26 & \underline{238.5} & \underline{254.3} \\
            & SAFE      & 11.61 & 16.21 & 15.22 & 20.32 & 25.67 &\underline{33.39} & \underline{68.55} & \underline{75.22} & 432.7 & 345.0 \\
            & SparseLLM & --- & ---  & ---  & --- & ---  & ---  & ----  & ---  & ---  & ---  \\
            \rowcolor{gray!10}\cellcolor{white}
            & \ours     & 13.05	& 17.22 & 15.93 & \textbf{19.83} & \textbf{21.22} & \textbf{24.55} & \textbf{30.29} & \textbf{31.68} & \textbf{49.37} & \textbf{44.93} \\
        \midrule
        \multirow{8}{*}{\shortstack{LLaMA-3.2-3B \\ \scriptsize (Dense: 7.81 / 11.32)}}
            & Magnitude & 139.4 & 216.1 & 1.5e4 & 1.4e4 & 1.0e5 & 8.1e5 & 3.5e5 & 3.5e5 & 3.0e5 & 2.4e5 \\
            & Wanda     & 13.01 & 19.08 & 31.39 & 42.53 & 142.4 & 168.1 & 3859 & 1821 & 1.4e4 & 8766 \\
            & SparseGPT & 12.27 & 17.41 & 23.38 & 30.47 & 86.88 & 84.12 & 292.9 & 237.1 & 1807 & 1094 \\
            & L-ADMM      & 11.56 & 16.32 & 19.06 & 24.84 & 45.48 & 53.30 & 160.4 & 126.9 & 760.5 & 509.5 \\
            & ALPS      & \underline{11.31} & \underline{15.88} & 18.16 & 22.83 & \underline{41.79} & \underline{46.48} & \underline{166.32} & \underline{109.0} & \underline{542.0} & \underline{367.0} \\
            & SAFE      & \textbf{10.68} & \textbf{15.51} & \underline{16.76} & \underline{22.57} & 50.78 & 57.86 & 330.9 & 267.2 & 3410 & 2343 \\
            & SparseLLM & --- & ---  & ---  & ---  & ---  & ---  & ---  & ---  & ---  & --- \\
            \rowcolor{gray!10}\cellcolor{white}
            & \ours     & 12.06 & 17.34 & \textbf{16.65} & \textbf{21.73} & \textbf{24.07} & \textbf{28.24} & \textbf{36.25} & \textbf{37.50} & \textbf{50.88} & \textbf{48.69} \\
                \midrule
        \multirow{8}{*}{\shortstack{LLaMA-2-7B \\ \scriptsize (Dense: 5.47 / 7.26)}}
            & Magnitude & 16.03 & 21.34 & 1924 & 2063 & 5.0e4 & 2.8e4 & NaN & NaN & NaN & NaN \\
            & Wanda     & 6.92 & 9.24 & 10.79 & 13.99 & 76.32 & 81.08 & 4096 & 2673 & 2.0e4 & 1.0e4 \\
            & SparseGPT & 7.01 & 9.23 & 10.20 & 12.93 & 27.12 & 30.94 & 107.3 & 100.8 & 1430 & 864.5 \\
            & L-ADMM    & 6.80 & 8.97 & 9.40 & 11.47 & 20.56 & 22.20 & 60.78 & 58.63 & 400.5 & 287.1  \\
            & ALPS      & \underline{6.86} & \underline{9.02} & 9.33 & \textbf{11.30} & \underline{19.39} & \underline{20.37} & \underline{48.43} & \underline{47.22} & \underline{248.8} & \underline{180.9} \\
            & SAFE      & \textbf{6.72} & \textbf{8.87} & \textbf{9.02} & 11.40 & 86.80 & 48.54 & 8.1e5 & 5.3e5 & 1.6e4 & 1.6e4  \\
            & SparseLLM & 7.23 & 9.51 & 10.74 & 13.25 & 37.65 & 35.00 & 126.5 & 94.28 & 1267 & 648.0 \\
            \rowcolor{gray!10}\cellcolor{white}
            & \ours     & 7.5 & 9.81 & \underline{9.16} & \underline{11.34} & \textbf{13.20} & \textbf{14.08} & \textbf{20.83} & \textbf{19.56} & \textbf{26.97} & \textbf{23.14} \\
        \midrule
        \multirow{8}{*}{\shortstack{LLaMA-2-13B \\ \scriptsize (Dense: 4.88 / 6.73)}}
            & Magnitude & 6.83 & 9.38 & 11.82 & 14.62 & 214.2 & 191.9 & 3.9e4 & 4.9e4 & 7.5e4 & 6.5e4 \\
            & Wanda     & 5.97 & 8.30 & 8.40 & 11.53 & 45.37 & 56.27 & 1004 & 838.8 & 2.2e4 & 1.3e4 \\
            & SparseGPT & 6.03 & 8.22 & 8.27 & 10.93 & 19.79 & 23.47 & 97.82 & 79.17 & 1442 & 984.1 \\
            & L-ADMM      & 5.92 & 8.11 & 7.57 & 10.05 & 14.81 & 17.56 & 44.78 & 44.42 & 391.1 & 242.1 \\
            & ALPS      & \underline{5.90} & \underline{7.99} & \underline{7.56} & \underline{9.92} & 14.17 & 16.28 & \underline{38.44} & \underline{36.78} & \underline{231.3} & \underline{152.1} \\
            & SAFE      & \textbf{5.73} & \textbf{7.82} & \textbf{6.90} & \textbf{9.24} & \underline{12.47} & \underline{14.57} & 93.49 & 73.25 & 2122 & 1388 \\
            & SparseLLM & ---  & ---  & ---  & ---  & ---  & ---  & ---  & ---  & ---  & ---  \\
            \rowcolor{gray!10}\cellcolor{white}
            & \ours     & 6.54 & 8.78 & 7.96 & 9.93 & \textbf{11.14} & \textbf{12.20} & \textbf{17.21} & \textbf{16.60} & \textbf{30.19} & \textbf{25.07}\\
        \bottomrule
    \end{tabular}}
\end{table}

\begin{table*}[t]
    \centering
    \scriptsize
    \setlength{\tabcolsep}{4pt}
    \caption{Zero-shot accuracy (\%) of Llama-2-7B across multiple tasks, in various sparsity regime (50\%-90\%).}
    \label{tab:llama7b-zeroshot}
    \resizebox{\linewidth}{!}{%
    \begin{tabular}{cl*{8}{>{\centering\arraybackslash}p{0.9cm}}}
        \toprule
        & & \multicolumn{8}{c}{\textbf{Tasks}} \\
        \cmidrule(lr){3-9}
        \textbf{Sparsity}& \textbf{Method} & ARC-C & ARC-E & BoolQ & HellaSwag & OBQA & RTE & Winogrande & Avg \\
        \midrule
        0\% & Dense &  43.35&	76.26&	77.68&	57.14&	31.40&	62.82&	69.06 & 59.67  \\
        \midrule
        \multirow{8}{*}{50\%}
            & Magnitude & 34.90 & 64.02 & 62.91 & 49.13 & 26.80 & 57.04 & 63.22 & 51.14 \\
            & Wanda     & 39.25 & 72.22 & 75.17 & 52.64 & 30.60 & 53.43 & 67.17 & 55.78 \\
            & SparseGPT & 38.23 & 71.34 & \underline{75.99} & 52.70 & 29.80 & 56.32 & \textbf{69.77} & 56.31 \\
            & L-ADMM    & \underline{39.68} & \underline{72.77} & \textbf{76.24} & \underline{53.35}& \textbf{31.40} & \textbf{61.37} & \underline{69.30} & \textbf{57.73} \\
            & ALPS      & \textbf{40.61} & \textbf{72.90} & 75.44 & \textbf{53.37} & \underline{30.80} & \underline{57.76} & 68.98 & \underline{57.14} \\
            & SAFE      & 38.14 & 72.14 & 74.83 & 52.15 & 26.00 & 57.04 & 66.77 & 55.30 \\
            & SparseLLM & 38.05 & 71.25 & 75.14 & 52.66 & 29.60 & 53.43 & 69.30 & 55.63 \\
            \rowcolor{gray!10}\cellcolor{white}
            & \ours & 39.42&	71.30&	73.03&	53.12&	29.40&	58.48&	66.54&	56.39 \\
        \midrule
        \multirow{8}{*}{60\%}
            & Magnitude & 25.17 & 44.87 & 47.80 & 35.00 & 20.00 & 50.90 & 53.12 & 39.55 \\
            & Wanda & 30.63 & 64.44 & 65.51 & 43.51 & 25.80 & \underline{54.15} & 64.01 & 49.72 \\
            & SparseGPT & 31.57 & 64.06 &\textbf{72.57} & 45.0 & 25.80 & 53.43 & 65.51 & 51.13 \\
            & L-ADMM & \underline{34.13} & \textbf{66.50} & 70.43 & 47.29 & 26.60 & \textbf{55.60} & \textbf{66.61} & \textbf{52.45} \\
            & ALPS & \textbf{34.38} & \underline{66.33} & 70.64 & \underline{47.81} & \textbf{27.2} & \underline{54.15} & \underline{66.29} & \underline{52.40} \\ 
            & SAFE      & 31.14 & 64.14 & \underline{71.10} & 46.43 & 24.00 & \underline{54.15} & 62.98 & 50.57 \\
            & SparseLLM & 32.59 & 64.52 & 70.86 & 45.24 & 25.80 & 53.79 & 66.14 & 51.28 \\
            \rowcolor{gray!10}\cellcolor{white}
            & \ours & 32.08&	65.83&	67.52&	\textbf{49.19}&	\underline{26.00}&	53.80&	63.38&	51.41 \\
        \midrule
            \multirow{8}{*}{70\%} 
                & Magnitude & 22.87 & 27.82 & 37.95 & 25.90 & 17.20 & 53.07 & 49.25 & 33.43 \\
                & Wanda & 18.6 & 30.01 & 57.28 & 28.04 & 12.0 & 52.71 & 48.86 & 35.36 \\
                & SparseGPT & 22.01 & 42.34 & \textbf{65.14}& 33.04 & 16.8 & 52.71 & 57.7 & 41.39 \\
                & L-ADMM & 23.81 & 50.63 & 63.21 & 36.57 & 20.40 & \underline{54.15} & 60.77 & 44.22 \\
                & ALPS     & \underline{25.51} & \underline{52.78} & \underline{63.46} & \underline{37.54} & \underline{20.8} & 53.43 &\underline{61.72}& \underline{45.03}  \\ 
                & SAFE & 24.23 & 45.62 & 43.76 & 34.74 & 18.40 & \underline{52.71} & 53.12 & 38.94  \\
                & SparseLLM & 20.90 & 40.32 & 61.87 & 32.74 & 16.0 & \textbf{54.51} & 57.46 & 40.54 \\
                \rowcolor{gray!10}\cellcolor{white}
            & \ours & \textbf{27.13} & \textbf{55.81} &	63.61&	\textbf{43.16}&	\textbf{22.40}&	52.71&	58.64&	\textbf{46.21}\\
            \midrule
            \multirow{8}{*}{80\%}
            & Magnitude & \textbf{22.35} & 25.38 & 43.67 & 25.72 & 13.00 & 46.57 & 51.62 & 32.62  \\
            & Wanda & 20.82 & 26.98 & 37.83 & 25.89 & \underline{15.0} & 52.71 & 49.25 & 32.64 \\
            & SparseGPT & 17.92 & 27.95 & 38.07 & 27.51 & 12.0 & \textbf{53.07} & 49.01 & 32.22 \\
            & L-ADMM & 18.26 & 29.29 & 57.49 & 28.33 &	13.00 & \textbf{53.07} & \underline{51.22} & 35.81 \\
            & ALPS & 19.37 & \underline{32.07} & \textbf{61.1} & \underline{29.06} & 12.6 & 52.71 & 50.91  & \underline{36.83}  \\
            & SAFE & \underline{21.76} & 25.80 & 37.83 & 26.01 & 14.00 & 52.71 & 49.80 & 32.56 \\
            & SparseLLM & 18.09 & 28.70 & 43.55 & 27.57 & 11.6 & 52.71 & 48.86 & 33.01 \\
            \rowcolor{gray!10}\cellcolor{white}
            & \ours & 20.99 & \textbf{44.61} &  \underline{60.67} &  \textbf{34.02} &  \textbf{16.80} & 52.71 & \textbf{53.20} &  \textbf{40.43} \\
            \midrule
            \multirow{8}{*}{90\%}
            & Magnitude & \textbf{22.78} & 25.93 & \underline{39.17} & 25.53 & \underline{16.0} & 47.29 & 50.12 & 32.40  \\
            & Wanda & \underline{21.67} & 25.46 & 37.83 & 25.83 & 15.2 & 47.29 & 49.33 & 31.8\\
            & SparseGPT & 20.65 & 26.77 & 37.83 & 25.7 & 13.0 & \underline{52.71} & \underline{50.59} & 32.46 \\
            & L-ADMM & 19.97 & 26.14 & 37.83 & 26.46 & 13.60 & 51.62 & 47.51 & 31.88 \\
            & ALPS & 19.45 & \underline{26.89} & 37.8 & \underline{26.81} & 12.8 & \textbf{53.79} & 46.65 & 32.03 \\
            & SAFE & 21.84 & 26.52 & 37.83 & 25.91 & 15.80 & \underline{52.71} & 47.83 & \underline{32.63} \\
            & SparseLLM & 20.56 & 25.72 & 37.83 & 25.94 & 13.8 & \underline{52.71} & 46.96 & 31.93 \\
            \rowcolor{gray!10}\cellcolor{white}
            & \ours &18.52 & \textbf{41.33} & \textbf{57.25} &  \textbf{31.54} & \textbf{16.60} &  \underline{52.71} &  \textbf{51.70} & \textbf{38.52} \\
        \bottomrule
    \end{tabular}%
    }
\end{table*}

\begin{table*}[t] %
    \centering
    \small
    \setlength{\tabcolsep}{6pt}
    \caption{Zero-shot accuracy (\%) of Llama-2 13B across multiple tasks, under various sparsity levels. We could not obtain SparseLLM in Llama-2-13B.}
    \label{tab:llama13b-zeroshot}
    \resizebox{\linewidth}{!}{%
    \begin{tabular}{cl*{8}{>{\centering\arraybackslash}p{1.1cm}}}
        \toprule
        \textbf{Sparsity} & \textbf{Method} & \multicolumn{8}{c}{\textbf{Tasks}} \\
        \cmidrule(lr){3-9}
        & & ARC-C & ARC-E & BoolQ & HellaSwag & OBQA & RTE & Winogrande & Avg \\
        \midrule
        0\% & Dense & 48.46 & 79.38 & 80.55 & 60.04 & 35.20 & 65.34 & 72.14 & 63.02 \\
        \midrule
        \multirow{7}{*}{50\%}
            & Magnitude &  38.48 & 70.58 & 57.65 & 54.39 & 27.80 & 55.96 & 65.35 & 52.89 \\
            & Wanda     &43.09 &\textbf{76.30} & 80.95 & \underline{56.96} & 31.20 & 60.65 & 71.43 & 60.08 \\
            & SparseGPT & 42.41 & 74.96 & \textbf{81.53} & 55.95 & 31.00 & \textbf{64.26} & 71.35 & 60.21 \\
            & L-ADMM    & \underline{43.17} & 75.84 & \textbf{82.29} & 56.51 & 32.00 & \underline{63.18} & \underline{71.98} & \textbf{60.71} \\
            & ALPS      & 42.66 & \textbf{76.30} & 81.22 & 56.71 & \underline{32.60} & 62.82 & \textbf{72.14} & \underline{60.64} \\
            & SAFE      & 41.64 & 75.84 & 80.40 & 56.59 & 30.60 & 60.65 & 69.14 & 59.27 \\
            \rowcolor{gray!10}\cellcolor{white}
            & \ours & \textbf{43.68} &	75.12&	77.71&	\textbf{57.45}&	\textbf{33.80}&	55.60&	70.00&	59.05 \\
        \midrule
        \multirow{7}{*}{60\%}
        & Magnitude 
        & 27.13 & 56.14 & 47.49 & 44.66 & 21.80 & 52.71 & 57.46 & 43.93 \\
        & Wanda 
        & 37.97 & 68.81 & 77.16 & 48.71 & 28.20 & 59.57 & 68.19 & 55.51 \\
        & SparseGPT 
        & 36.01 & 69.40 & 78.72 & 49.38 & 27.4 & 57.76 & \underline{70.56} & 55.60 \\
        & L-ADMM & 39.76 & \textbf{72.98} & \underline{80.70} & 51.49 & 29.8 & \underline{59.93} & 70.32 &	\underline{57.86} \\
        & ALPS     
        & \underline{40.44} & \underline{72.93} & \textbf{81.68} & 51.97 & \underline{30.80} & \textbf{60.29} & \textbf{71.90} & \textbf{58.58} \\
        & SAFE  
        & 36.95 & 72.43 & 78.38 & \underline{52.09} & 28.80 & 57.40 & 67.88 & 56.28 \\

        \rowcolor{gray!10}\cellcolor{white}
        & \ours 
        &  \textbf{40.53} &	70.67&	75.17&	\textbf{54.22} &	\textbf{31.00}&	53.43&	67.80&	56.12 \\
        \midrule
        \multirow{7}{*}{70\%}
        & Magnitude 
        & 20.65 & 31.31 & 38.65 & 27.53 & 14.60 & 52.71 & 49.25 & 33.53 \\
        & Wanda 
        & 18.43 & 36.45 & 62.35 & 29.25 & 13.0 & 52.71 & 50.83 & 37.57 \\
        & SparseGPT 
        & 25.34 & 49.58 & 67.86 & 36.27 & 20.2 & 52.71 & 60.93 & 44.70 \\
        & L-ADMM & 27.56 & 59.64 & 69.76 & 40.05 & 24.00 & \textbf{53.43} & \textbf{65.35} & 48.54 \\
        & ALPS     
        & 29.61 & 61.20 & 70.09 & 40.86 & \textbf{26.6} & \underline{53.07} & \underline{64.56} & \underline{49.43} \\
        & SAFE
        & 29.78 & 61.07 & 69.17 & 41.62 & 20.20 & 52.71 & 58.96 & 47.64 \\

        \rowcolor{gray!10}\cellcolor{white}
        & \ours &  \textbf{34.13} &	\textbf{62.42} & \textbf{70.12} &	\textbf{47.52} &	24.80&	52.71&	60.38&	\textbf{50.30} \\
        \midrule
        \multirow{7}{*}{80\%}
        & Magnitude 
        & \underline{21.84} & 25.63 & 41.80 & 25.88 & \underline{14.80} & \textbf{53.07} & 49.25 & 33.18 \\
        & Wanda 
        & 20.48	& 26.26 & 37.83 & 26.81 & 12.6 & \underline{52.71} & 50.04 & 32.39 \\
        & SparseGPT 
        & 19.62 & 28.79 & 59.05 & 27.77 & 12.8 & \underline{52.71} & 49.33 & 35.72 \\
        & L-ADMM & 19.11 & 33.80 & 62.14 & 29.67 & 14.60 & \underline{52.71} & 53.28 & 37.90 \\
        & ALPS     
        & 20.05 & \underline{35.99} & \underline{62.17} & 30.65 & 14.0 & \underline{52.71} & \textbf{54.93} & \underline{38.64} \\
        & SAFE 
        & 18.34 & 28.37 & 40.64 & 27.44 & 12.80 & \underline{52.71} & 50.51 & 32.97 \\
        \rowcolor{gray!10}\cellcolor{white}
        & \ours &   \textbf{24.32}&	\textbf{50.97} &	\textbf{63.52} & \textbf{38.03} &	\textbf{19.60} & \underline{52.71} & \underline{53.75} & \textbf{43.27} \\
        \midrule
        \multirow{7}{*}{90\%}
        & Magnitude 
        & \underline{21.42} & 24.87 & 44.16 & 25.72 & \underline{15.0} & 46.57 & \underline{51.78} & 32.79 \\
        & Wanda 
        & 21.33 & 25.93 & 37.83 & 25.80 & 13.8 & \underline{52.71} & 51.54 & 32.70 \\
        & SparseGPT 
        & 21.08 & 25.76 & \textbf{58.62} & 25.87 & 13.8 & 52.35 & 49.49 & \underline{35.28} \\
        & L-ADMM & 19.88 & 26.01 & 39.45 & 27.08 & 13.80 & \textbf{53.79} & 50.04 & 32.86 \\
        & ALPS     
        & 18.94 & \underline{26.94} & 43.52 & \underline{27.37} & 13.4 & \underline{52.71} & 48.30 & 33.02 \\
        & SAFE 
        & \textbf{22.10} & 25.76 & 37.83 & 26.02 & 14.20 &  \underline{52.71} & \textbf{53.43} & 33.15 \\
        \rowcolor{gray!10}\cellcolor{white}
        & \ours &  19.03 & \textbf{36.15} & \underline{58.44} & \textbf{28.65} & \textbf{16.20} & \underline{52.71} & 50.43  & \textbf{37.37}\\
        \bottomrule
    \end{tabular}
    }
\end{table*}

\end{document}